\newcolumntype{d}[1]{D{.}{.}{#1} }
\newtheorem{theorem}{Theorem}
\newtheorem{lemma}{Lemma}
\newtheorem{definition}{Definition}
\newcommand{\eg}{e.\,g., }
\newcommand{\ie}{i.\,e., }
\title{\textbf{k-hop graph neural networks}}
\author{Giannis Nikolentzos\textsuperscript{\rm 1,2}, George Dasoulas\textsuperscript{\rm 1,3} and Michalis Vazirgiannis\textsuperscript{\rm 1,2}\\ 
\textnormal{\textsuperscript{\rm 1}\'Ecole Polytechnique, France}\\
\textnormal{\textsuperscript{\rm 2}Athens University of Economics and Business, Greece}\\
\textnormal{\textsuperscript{\rm 3}Noah's Ark Lab, Huawei, France}
}
\date{}
\begin{document}
\maketitle 

\begin{abstract}
Graph neural networks (GNNs) have emerged recently as a powerful architecture for learning node and graph representations. 
Standard GNNs have the same expressive power as the Weisfeiler--Lehman test of graph isomorphism in terms of distinguishing non-isomorphic graphs.
However, it was recently shown that this test cannot identify fundamental graph properties such as connectivity and triangle freeness.
We show that GNNs also suffer from the same limitation.
To address this limitation, we propose a more expressive architecture, $k$-hop GNNs, which updates a node's representation by aggregating information not only from its direct neighbors, but from its $k$-hop neighborhood.
We show that the proposed architecture can identify fundamental graph properties.
We evaluate the proposed architecture on standard node classification and graph classification datasets.
Our experimental evaluation confirms our theoretical findings since the proposed model achieves performance better or comparable to standard GNNs and to state-of-the-art algorithms.
\end{abstract}

\section{Introduction}

In the past years, the amount of graph-structured data has grown steadily in a wide range of domains, such as in social networks and in chemoinformatics.
Learning useful representations from graph data is essential for many real-world applications.
For instance, in social network analysis, one might be interested in predicting the interests of users represented by the nodes of a network~\cite{yang2011like}.
In biology, an issue of high interest is the prediction of the functions of proteins modeled as graphs~\cite{borgwardt2005protein}.
These applications typically involve either node-focused or graph-focused tasks, with the difference lying in the entity of interest, a node or a graph.

Graph Neural Networks (GNNs) have recently emerged as a general framework addressing both node-related and graph-related tasks~\cite{bronstein2017geometric}.
Although numerous GNN variants have been proposed in the past
years~\cite{gori2005new,scarselli2009graph,li2015gated,defferrard2016convolutional,kearnes2016molecular,lei2017deriving,hamilton2017inductive,kipf2017semi,velickovic2018graph,zhang2018end}, they all share the same basic
idea, and can be reformulated into a single common framework.
Specifically, GNNs employ a message passing procedure, where each node updates its feature vector by aggregating the feature vectors of its neighbors~\cite{gilmer2017neural}.
After $k$ iterations of the message passing procedure, each node obtains a feature vector which captures the structural information within its $k$-hop neighborhood.
These representations can be used as features for node-related tasks.
For graph-related tasks, GNNs compute a feature vector for the entire graph using some permutation invariant readout function such as summing the feature vectors of all the nodes of the graph.

The GNN architectures have achieved state-of-the-art performance in many tasks such as in node classification and in link prediction.
Furthermore, they have seen considerable success in graph-related tasks even though they have faced intense competition from graph kernels, the approach that dominated the field for more than a
decade~\cite{nikolentzos2019graph}.
With the exception of the work of Scarselli~\cite{scarselli2008computational}, until recently, there has been little attempt to understand the properties and limitations of GNNs.
It was clear though that there is a close connection between GNNs and the Weisfeiler--Lehman (WL) test of graph isomorphism~\cite{weisfeiler1968reduction}, a powerful heuristic which can successfully test isomorphism for a
broad class of graphs~\cite{babai1979canonical}.
Similar to GNNs, the WL test iteratively updates a given node's label by aggregating the labels of its neighbors.
Specifically, the algorithm augments the label of each node by the sorted multiset of labels of neighboring nodes, and compresses these augmented labels into new, short labels.
To test graph isomorphism, this relabeling procedure is repeated until the label sets of the input graphs are not identical, or the number of iterations reaches a specific value.

Some recent studies have made attempts to formally characterize the expressive power of GNNs~\cite{morris2018weisfeiler,xu2018powerful}.
These studies have compared the expressiveness of GNNs with that of the WL test, and have shown that GNNs do not have more power in terms of distinguishing between non-isomorphic graphs than the WL algorithm.
To make matters worse, it was recently shown that the WL subtree kernel (which capitalizes on the WL test) has insufficient expressive power for identifying fundamental graph properties~\cite{kriege2018property}.
It remains though unclear how GNNs encode subgraph/graph information into their learned representations, and whether they can identify such properties.
Since these architectures are directly related to the WL test, they may also lack expressive power, and they may fail to identify these properties.
Hence, the need for more powerful representations is more essential than ever.

\textbf{Present Work.}
In this paper, we further analyze the representational power of GNNs.
Specifically, we study if GNNs can identify specific properties of graphs.
We say that a GNN identifies a property if no two graphs are mapped to the same feature vector unless they both have or both do not have the property. 
We demonstrate that the standard GNN fails to identify fundamental graph properties such as connectivity, bipartiteness and triangle-freeness.
We show that this limitation of GNNs stems from the myopic nature of the message-passing procedure which only considers the direct neighbors of each node.
To account for that, we propose a novel architecture, called $k$-hop-GNNs, which takes into account not only the immediate neighbors of each node, but its whole $k$-hop neighborhood.
By updating node features using not only the direct neighbors, but taking into account the entire $k$-hop neighborhood, we can capture structural information that is not visible when aggregating only the $1$-hop neighborhood.
The proposed model is strictly more powerful than the standard GNN architecture.
Furthermore, in contrast to the GNN framework, the proposed architecture is capable of distinguishing global properties such as connectivity.
We demonstrate the proposed architecture in a variety of node and graph classification tasks.
The results show that the proposed $k$-hop-GNNs are able to consistently outperform traditional GNNs on most datasets.
Our main contributions are summarized as follows:
\begin{itemize}
    \item We show that standard GNNs cannot identify essential graph properties such as connectivity, bipartiteness and triangle-freeness.
    \item We propose $k$-hop-GNNs, a novel architecture for performing machine learning on graphs which is more powerful than traditional GNNs.
    \item We evaluate the proposed architecture on several node classification and graph classification datasets, and achieve performance better or comparable to standard GNNs and to state-of-the-art algorithms.
\end{itemize}

The rest of this paper is organized as follows.
Section~\ref{sec:preliminaries} introduces some preliminary concepts and summarizes the standard graph neural network model.
Section~\ref{sec:limitations} analyzes the expressive power of the graph neural network model highlighting its limitations.
Section~\ref{sec:contribution} presents the proposed model for performing machine learning tasks on graph-structured data, and shows that it is theoretically more powerful than the standard graph neural network architecture.
Section~\ref{sec:experiments} evaluates the proposed architecture on several standard datasets.
Finally, section~\ref{sec:conclusion} concludes.

\section{Preliminaries}\label{sec:preliminaries}
We start by fixing our notation, and then we present the standard graph neural network framework.

\subsection{Notation}

Let $G = (V,E)$ be an undirected graph consisting of a set $V$ of nodes and a set $E$ of edges between them.
We will denote by $n$ the number of nodes and by $m$ the number of edges.
The neighborhood of radius $k$ (or $k$-hop neighborhood) of a node $v \in V$ is the set of nodes at a distance less than or equal to $k$ from $v$ and is denoted by $\mathcal{N}_k(v)$.
Given a set of nodes $S \subseteq V$, the subgraph induced by $S$ is a graph that has $S$ as its node set and it contains every edge of $G$ whose endpoints are in $S$.
The neighborhood subgraph of radius $k$ of a node $v \in V$ is the subgraph induced by the neighborhood of radius $k$ of $v$ and $v$ itself, and is denoted by $G_v^k$.
A node-labeled graph is a graph with labels on nodes.
That is, given a set of labels $\mathcal{L}$, there exists a function $\ell : V \rightarrow \mathcal{L}$ that assigns labels to the nodes of the graph.
Instead of discrete labels, nodes may also be annotated with multiple categorical or real-valued properties.
These graphs are known as attributed graphs.

\subsection{Graph Neural Networks}

Let $G=(V,E)$ be a graph.
Suppose each vertex $v \in V$ is annotated with a feature vector $h_v^{(0)} \in \mathbb{R}^d$.
For graphs with discrete node labels, these vectors usually correspond to some representations (\eg one-hot encoding) such that two nodes $v$ and $u$ have identical feature vectors if and only if $\ell(v) = \ell(u)$.
For attributed graphs, these feature vectors may be set equal to the attribute vectors of the nodes.
 For graphs without node labels and node attributes, these vectors can be initialized with a collection of local vertex features that are invariant to vertex renumbering (\eg degree, $k$-core number, number of triangles, etc.).
A GNN model consists of a series of neighborhood aggregation layers.
Each one of these layers uses the graph structure and the node feature vectors from the previous layer to generate new representations for the nodes.
The feature vectors are updated by aggregating local neighborhood information.

Suppose we have a GNN model that contains $T$ neighborhood aggregation layers.
In the $t$-th neighborhood aggregation layer ($t > 0$), the hidden state $h_v^{(t)}$ of a node $v$ is updated as follows:
\begin{equation}
    \begin{split}
        a_v^{(t)} &= \text{AGGREGATE}^{(t)}  \Bigl( \Bigl\{ h_u^{(t-1)} \vert  u \in \mathcal{N}_1(v) \Bigr\} \Bigr) \\
        h_v^{(t)} &= \text{MERGE}^{(t)}  \Bigl( h_v^{(t-1)}, a_v^{(t)}  \Bigr)
    \end{split}
    \label{eq:gnn_general}
\end{equation}
By defining different $\text{AGGREGATE}^{(t)}$ and $\text{MERGE}^{(t)}$ functions, we obtain a different GNN variant.
For the GNN to be end-to-end trainable, both functions need to be differentiable.
Furthermore, since there is no natural ordering of the neighbors of a node, the $\text{AGGREGATE}^{(t)}$ function must be permutation invariant.
There are numerous concrete implementations of the above GNN framework.
Some of them integrate the $\text{AGGREGATE}^{(t)}$ and $\text{MERGE}^{(t)}$ steps into a single function~\cite{kipf2017semi,zhang2018end} as follows:
\begin{equation*}
    h_v^{(t)} = \frac{1}{|\mathcal{N}_1(v)| +1} \sum_{u \in \mathcal{N}_1(v) \cup \{ v \}} \text{MLP}^{(t)} \bigl( h_u^{(t-1)} \bigr) 
\end{equation*}
where $\text{MLP}^{(t)}$ is a multi-layer perceptron of the $t$-th neighborhood aggregation layer.
Note that the majority of the proposed models use $1$-layer perceptrons instead of MLPs.
Another widely-used GNN model is implemented as follows~\cite{lei2017deriving}:
\begin{equation*}
    \begin{split}
        a_v^{(t)} &= \frac{1}{|\mathcal{N}_1(v)|} \sum_{u \in \mathcal{N}_1(v)} \text{MLP}_1^{(t)} \bigl( h_u^{(t-1)} \bigr) \\
        h_v^{(t)} &= \text{MLP}_2^{(t)} \bigl( h_v^{(t-1)} + a_v^{(t)} \bigr)
    \end{split}
\end{equation*}
where again $\text{MLP}_1^{(t)}$ and $\text{MLP}_2^{(t)}$ are multi-layer perceptrons of the $t$-th neighborhood aggregation layer.

For node-level tasks, the node feature vectors $h_v^{(T)}$ of the final neighborhood aggregation layer are usually passed on to a fully-connected neural network.
For graph-level tasks, GNNs apply a $\text{READOUT}$ function to node representations generated by the final neighborhood aggregation layer to obtain a vector representation over the whole graph:
\begin{equation}
    h_G = \text{READOUT} \Bigl( \Bigl\{ h_v^{(T)} \vert  v \in V \Bigr\} \Bigr)
    \label{eq:readout}
\end{equation}
Similarly to the $\text{AGGREGATE}^{(t)}$ function, the $\text{READOUT}$ function is necessary to be differentiable and permutation invariant.
A common $\text{READOUT}$ function computes the mean of the representations of the nodes:
\begin{equation*}
    h_G = \frac{1}{|V|} \sum_{v \in V}  h_v^{(T)}
\end{equation*}
However, there have also been proposed more sophisticated functions based on sorting~\cite{zhang2018end}, on concatenation across the iterations/layers~\cite{xu2018powerful} and on
clustering~\cite{such2017robust,ying2018hierarchical}.

\section{Limitations of the Standard GNN Model}\label{sec:limitations}
To gain theoretical understanding of the properties and weaknesses of GNNs, we capitalize on concepts introduced by Goldreich in the context of property testing~\cite{goldreich2017introduction}, and further refined by Kriege \textit{et al.} for investigating the expressive power of graph kernels~\cite{kriege2018property}.

Let $\mathcal{G}_n$ be the set of graphs on $n$ vertices, where $n \in \mathbb{N}$.
A graph property is a set $\mathcal{P}$ of graphs that is closed under isomorphism.
We denote the set of graphs in $\mathcal{P}$ on $n$ vertices by $\mathcal{P}_n$.
In this paper we study the following three fundamental graph properties: ($1$) connectivity, ($2$) bipartiteness, and ($3$) triangle-freeness. 
A graph is \textit{connected} if there is a path from any vertex to any other vertex in the graph.
A graph $G=(V,E)$ is \textit{bipartite} if its set of vertices $V$ can be decomposed into two disjoint sets $V_1$ and $V_2$, \ie $V=V_1 \cup V_2$, such that every edge $e \in E$ connects a vertex in $V_1$ to a vertex in $V_2$.
Finally, a graph is \textit{triangle-free} if it does not contain a triangle (\ie a cycle of three vertices).
Following~\cite{kriege2018property}, we say that a GNN can identify a property if no two graphs obtain the same representation unless they both have or both do not have the property.
\begin{definition}
    Let $\mathcal{P}$ be a graph property.
    If for each $n \in \mathbb{N}$, a GNN produces different representations for every $G_1 \in \mathcal{P}_n$ and $G_2 \not \in \mathcal{P}_n$, \ie it holds that $h_{G_1} \neq h_{G_2}$, then we say that $\mathcal{P}$ can be identified by the GNN.
\end{definition}
We next study if the standard GNN architecture can identify the above three graph properties.
We assume that either all nodes or nodes with the same degree are annotated with the same feature vector.
We first show that the standard GNN produces exactly the same representation for the nodes of all regular graphs of a specific degree in $\mathcal{G}_n$ for some $n \in \mathbb{N}$.
\begin{lemma}
    The standard GNN maps the nodes of two regular graphs of the same size and degree to the same feature vector.
    \label{lem:same_mapping}
\end{lemma}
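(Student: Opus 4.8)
The plan is to prove the stronger statement, by induction on the layer index $t$, that \emph{all} nodes of \emph{both} graphs carry one and the same feature vector $h^{(t)}$ after layer $t$. Fix $n$ and let $G_1$ and $G_2$ be two $d$-regular graphs on $n$ vertices.

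For the base case ($t=0$), I would invoke the standing assumption that nodes of equal degree are annotated with the same initial feature vector. Every vertex of a $d$-regular graph has degree $d$, so every vertex of $G_1$ and every vertex of $G_2$ starts with the same vector $h^{(0)}$.

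For the inductive step, suppose after layer $t-1$ that $h_w^{(t-1)} = h^{(t-1)}$ for every $w \in V(G_1) \cup V(G_2)$, for a common vector $h^{(t-1)}$. Fix any node $v$ in either graph. Since $|\mathcal{N}_1(v)| = d$, the multiset $\{ h_u^{(t-1)} \mid u \in \mathcal{N}_1(v) \}$ consists of exactly $d$ copies of $h^{(t-1)}$, independently of which node $v$ is and which graph it lies in. Hence $a_v^{(t)} = \text{AGGREGATE}^{(t)}\big( \{ h_u^{(t-1)} \mid u \in \mathcal{N}_1(v) \} \big)$ equals a single common value $a^{(t)}$ — note we do not even need permutation invariance here, since the input multisets are literally identical — and therefore $h_v^{(t)} = \text{MERGE}^{(t)}\big( h^{(t-1)}, a^{(t)} \big)$ equals a single common value $h^{(t)}$. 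This closes the induction, so after all $T$ layers every node of $G_1$ and every node of $G_2$ carries the feature vector $h^{(T)}$, which is exactly the claim.

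The argument is essentially forced once the right invariant (``same vector for all nodes across both graphs'') is isolated, so there is no substantial obstacle; the only points requiring care are that the base case is genuinely covered by the annotation hypothesis (\ie initial features may depend on degree but nothing finer), and that the aggregation input is treated as a multiset rather than an ordered tuple. It is worth noting that, combined with the $\text{READOUT}$ of Eq.~\eqref{eq:readout} — which is a function of the multiset $\{ h_v^{(T)} \mid v \in G \}$ alone — this immediately yields $h_{G_1} = h_{G_2}$, the graph-level consequence that will drive the non-identifiability results for connectivity, bipartiteness, and triangle-freeness.
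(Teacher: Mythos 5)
Your proof is correct and follows essentially the same inductive argument as the paper: all nodes of both $d$-regular graphs share a common feature vector at every layer because each node's neighbor multiset is $d$ copies of that vector, so the inputs to $\text{AGGREGATE}^{(t)}$ and $\text{MERGE}^{(t)}$ are identical. Your explicit handling of the base case via the degree-based annotation assumption and the remark on the $\text{READOUT}$ consequence are just slightly more careful statements of what the paper does in Lemma~\ref{lem:same_mapping} and Theorem~\ref{theorem:limit}.
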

\begin{proof}
    Let $G_1=(V_1, E_1)$ and $G_2=(V_2, E_2)$ be two non-isomorphic regular graphs of the same degree with the same number of vertices.
    We show for an arbitrary iteration $t \geq 1$ and nodes $v_1 \in V_1$, $v_2 \in V_2$ that $h_{v_1}^{(t)} = h_{v_2}^{(t)}$.
    All nodes have the same initial representation, hence, in iteration $0$, it holds that $h_{v_1}^{(0)} = h_{v_2}^{(0)}$.
    Assume for induction that $h_{v_1}^{(t-1)} = h_{v_2}^{(t-1)}$.
    Let $\mathcal{M}_{v_1} = \{ h_{u_1}^{(t-1)} | u_1 \in \mathcal{N}_1(v_1) \}$ and $\mathcal{M}_{v_2} = \{ h_{u_2}^{(t-1)} | u_2 \in \mathcal{N}_1(v_2) \}$ be the multisets of feature vectors of the neighbors of $v_1$ and $v_2$, respectively.
    By the induction hypothesis, we know that $\mathcal{M}_{v_1} = \mathcal{M}_{v_2}$ and that $h_{v_1}^{(t-1)} = h_{v_2}^{(t-1)}$ such that independent of the choice of the $\text{AGGREGATE}^{(t)}$ and $\text{MERGE}^{(t)}$ functions in~\eqref{eq:gnn_general}, we get $h_{v_1}^{(t)} = h_{v_2}^{(t)}$.
    This holds as the input to both functions $\text{AGGREGATE}^{(t)}$ and $\text{MERGE}^{(t)}$ is identical.
    This proves that $h_{v_1}^{(t)} = h_{v_2}^{(t)}$, and thereby the lemma.
\end{proof}

\begin{figure}[t]
    \centering
    \includegraphics[width=.6\textwidth]{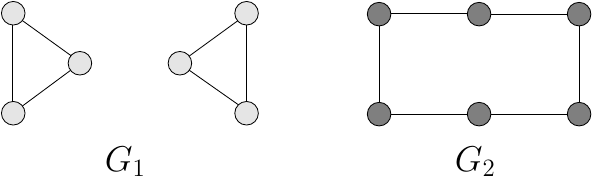}
    \caption{Two $2$-regular graphs on $6$ vertices. The two graphs serve as a counterexample for the proof of Theorem~\ref{theorem:limit}.}
    \label{fig:example1}
\end{figure}

Note that the Lemma implies that two regular graphs with the same node degree and the same size also have the same graph representation because the READOUT function receives the same input.
We next show that the GNN architecture cannot identify the three graph properties defined above since for each one of these properties, there exists one regular graph in $\mathcal{P}_n$ and another regular graph of the same degree in $\mathcal{G}_n \setminus \mathcal{P}_n$.
\begin{theorem}
    The standard GNN cannot identify connectivity, bipartiteness or triangle freeness.
    \label{theorem:limit}
\end{theorem}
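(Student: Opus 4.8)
The plan is to derive the theorem directly from Lemma~\ref{lem:same_mapping} together with the permutation invariance of the $\text{READOUT}$ function, so that the only real work is exhibiting suitable pairs of graphs. First I would record the following immediate consequence of Lemma~\ref{lem:same_mapping}: if $G_1$ and $G_2$ are two $r$-regular graphs on $n$ vertices, then after the final neighborhood aggregation layer every node of $G_1$ and every node of $G_2$ carries one and the same feature vector $h^{(T)}$; hence the multiset $\{ h_v^{(T)} : v \in G_1 \}$ equals $\{ h_v^{(T)} : v \in G_2 \}$ (both consist of $n$ copies of $h^{(T)}$), and since $\text{READOUT}$ depends only on this multiset we get $h_{G_1} = h_{G_2}$. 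Thus, to prove that a property $\mathcal{P}$ cannot be identified it suffices to exhibit a single $n \in \mathbb{N}$ and a pair of same-size, same-degree regular graphs $G_1 \in \mathcal{P}_n$ and $G_2 \in \mathcal{G}_n \setminus \mathcal{P}_n$.

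Next I would produce one pair that works simultaneously for all three properties. Take $n = 6$, let $G_1 = C_6$ be the cycle on six vertices, and let $G_2 = C_3 \cup C_3$ be the disjoint union of two triangles. Both graphs are $2$-regular on $6$ vertices (and non-isomorphic, which is automatic since the properties are closed under isomorphism). Now $C_6$ is connected, bipartite (its vertices split into the two colour classes of the cycle), and triangle-free, whereas $C_3 \cup C_3$ is disconnected, is not bipartite (it contains odd cycles), and contains triangles. So this single pair lies on the ``has the property'' side for $G_1$ and on the ``does not have the property'' side for $G_2$, for each of $\mathcal{P} \in \{\text{connectivity}, \text{bipartiteness}, \text{triangle-freeness}\}$.

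Finally I would assemble the argument: by Lemma~\ref{lem:same_mapping} and the consequence above, the standard GNN yields $h_{C_6} = h_{C_3 \cup C_3}$. Since $C_6 \in \mathcal{P}_6$ and $C_3 \cup C_3 \notin \mathcal{P}_6$ for each of the three properties, the identification criterion fails in every case, so the standard GNN can identify none of connectivity, bipartiteness, or triangle-freeness. If one wants witnesses for larger sizes as well, the same idea scales: for any even $n \geq 6$, comparing $C_n$ with an appropriate disjoint union of cycles of total size $n$ gives analogous $2$-regular counterexamples.

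There is no substantive obstacle here, as Lemma~\ref{lem:same_mapping} carries the argument, but two small points need care. One must invoke the standing assumption that nodes of equal degree are annotated with equal initial features, so that $h_v^{(0)}$ is the same constant across both $2$-regular graphs; this is exactly what allows the induction in Lemma~\ref{lem:same_mapping} to start. And one must use that the final graph-level representation is obtained from the node features through the permutation-invariant $\text{READOUT}$, so that equality of the per-node vectors propagates to equality of $h_{G_1}$ and $h_{G_2}$.
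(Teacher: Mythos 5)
Your proposal is correct and follows essentially the same route as the paper: invoke Lemma~\ref{lem:same_mapping} for same-size, same-degree regular graphs, propagate the resulting equality of node features through the permutation-invariant $\text{READOUT}$ to get equal graph representations, and then exhibit regular counterexample pairs. The only (harmless) difference is an economy of examples: you use the single pair $C_6$ versus $C_3 \cup C_3$ for all three properties, whereas the paper uses that $2$-regular pair only for connectivity and a separate pair of $3$-regular graphs on six vertices for bipartiteness and triangle-freeness.
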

\begin{proof}
    Consider a cycle with six vertices (graph $G_2$) and two triangles with three vertices (graph $G_1$) as illustrated in Figure~\ref{fig:example1}.
    Both $G_1$ and $G_2$ are regular graphs of the same degree with the same number of vertices.
    Hence, according to Lemma~\ref{lem:same_mapping}, after $T$ neighborhood aggregation steps, the nodes of both graphs have obtained identical representations, \ie $h_{v}^{(T)} = h_{u}^{(T)}, \, \forall v,u \in V_1 \cup V_2$.
    Therefore, independent of the choice of the $\text{READOUT}$ function in~\eqref{eq:readout}, the two graphs will have identical representations, $h_{G_1} = h_{G_2}$, since the input to the $\text{READOUT}$ function is identical.
    Clearly, $G_1$ is disconnected, while $G_2$ is connected.
    Hence, these two graphs correspond to a counterexample to the distinguishability of connectivity.
    Furthermore, consider the graphs $G_3$ and $G_4$ as illustrated in Figure~\ref{fig:example2}.
    Note that $G_3$ contains triangles, but is not bipartite, whereas $G_4$ is bipartite, and triangle-free.
    Both $G_3$ and $G_4$ are regular graphs of the same degree with the same number of vertices.
    Therefore, they obtain identical representations $h_{G_3} = h_{G_4}$, and correspond thus to a counterexample to the distinguishability of the above two properties.
\end{proof}

\begin{figure}[t]
    \centering
    \includegraphics[width=.6\textwidth]{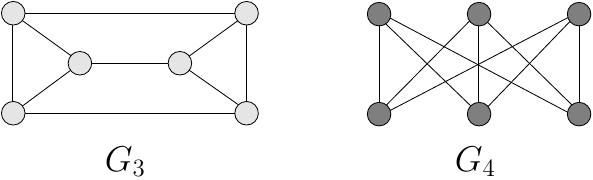}
    \caption{Two $3$-regular graphs on $6$ vertices. The two graphs serve as a counterexample for the proof of Theorem~\ref{theorem:limit}.}
    \label{fig:example2}
\end{figure}

We should note here that there exist complex network measures which have served as features for node classification tasks in previous studies and which can capture some of these graph
properties~\cite{silva2012network,carneiro2017organizational,cupertino2018scheme}.

\section{k-hop Graph Neural Networks}\label{sec:contribution}
In this section, we propose a generalization of GNNs, so-called $k$-hop Graph Neural Networks ($k$-hop GNNs).
This new model consists of neighborhood aggregation layers that do not take into account only the direct neighbors of the nodes, but their entire $k$-hop neighborhood.
Hence, instead of the neighborhood aggregation layer shown in~\eqref{eq:gnn_general}, the proposed model updates the hidden state $h_v^{(t)}$ of a node $v$ as follows:
\begin{equation}
    \begin{split}
        a_v^{(t)} &= \text{AGGREGATE}^{(t)}  \Bigl( \Bigl\{ h_u^{(t-1)} \vert  u \in \mathcal{N}_k(v) \Bigr\} \Bigr) \\
        h_v^{(t)} &= \text{MERGE}^{(t)}  \Bigl( h_v^{(t-1)}, a_v^{(t)}  \Bigr)
    \end{split}
    \label{eq:gnn_proposed}
\end{equation}
We next present an instance of the proposed architecture which is strictly stronger than standard GNNs in terms of distinguishing non-isomorphic graphs, and is capable of identifying graph properties which are not captured by the standard GNN architecture.

\subsection{Proposed Architecture}

Let $G = (V,E)$ be a graph.
In what follows, we will focus on a single node $v \in V$, and we will present how the representation of this node is updated during the neighborhood aggregation phase.
Node $v$ will also be referred as the root of the $k$-hop neighborhood subgraph $G_v^k$.
For a given iteration/layer $t$, and a root node $v$, we define an \textit{inner} representation $x_u$ of each node $u \in \mathcal{N}_k(v)$ and we initialize it as $x_u = h_u^{(t-1)}$.
We will next describe how the hidden state $h_v^{(t)}$ of the root $v$ is computed.
Let $\text{UPDATE}(w, S)$ denote a module which takes as input a node $w$ and a set of nodes $S$, and is defined as follows:
\[
    \text{UPDATE}(w, S) = \text{MLP} \Bigl( \text{MLP}_1 \bigl(x_w\bigr) + \sum_{u \in S} \text{MLP}_2 \bigl( x_u \bigr) \Bigr)
\]
where $\text{MLP}, \text{MLP}_1, \text{MLP}_2$ are multi-layer perceptrons and $x_w, x_u$ are the \textit{inner} representations of nodes $w$ and $u$, respectively.
The proposed approach uses a series of $\text{UPDATE}$ modules to update the representations of the nodes that belong to the $k$-hop neighborhood of $v$, following a sequential procedure from the most distant ones to the direct neighbors of $v$.
Although the neural network learns a new vector representation for some of the nodes $u \in \mathcal{N}_k(v)$, these feature vectors are only calculated in the context of updating the root node's representation.
Hence, after computing the new representation $h_v^{(t)}$ of $v$, these representations are not useful any more.
These \textit{inner} representations should not be confused with the $h_u^{(t)}$ representation that the network learns for each of these nodes by taking into account their own $k$-hop neighborhoods.

Let $R_d(v)$ denote the set of nodes at distance (hop count) exactly $d > 0$ from $v$.
Hence, $R_1(v) = \mathcal{N}_1(v)$ is the set of direct neighbors of $v$, while $R_d(v)$ denotes the ring of nodes at distance $d$, which we refer to as the nodes at level $d$.
Note that the neighbors of a node $u \in R_d(v)$ belong to one of the next three sets: $R_{d-1}(v)$, $R_d(v)$ or $R_{d+1}(v)$.
Specifically, $u$ cannot be connected with nodes at levels $l > d+1$ because then these nodes would belong to level $d+1$ instead of $l$.
Furthermore, $u$ cannot be connected with nodes at levels $l < d-1$ because then $u$ would belong to some level smaller than $d$.
Given a node $u \in \mathcal{N}_k(v)$, the \textit{inner} representation $x_u$ of $u$ is updated at most twice.
The two updates aggregate information from the neighbors of $u$ that are located at the immediately higher and at the same level of the neighborhood subgraph, respectively.
Hence, the proposed model performs the following two types of updates of \textit{inner} representations: ($1$) updates across rings of nodes, and ($2$) updates within a ring of nodes.
We next present these two updates in detail:
\begin{itemize}
    \item Updates \textit{\textbf{across}} rings of nodes:
    Let $u \in R_d(v)$ be a node that belongs to the $k$-hop neighborhood of $v$ and whose shortest path distance from $v$ is equal to $d$.
    Let also $\mathcal{B} = \mathcal{N}_1(u) \cap R_{d+1}(v)$ denote the neighbors of $u$ that belong to level $d+1$ of $G_v^k$.
    Note that $\mathcal{B}$ is empty if $k=d$ or if all the neighbors of $u$ belong to levels $d-1$ and $d$ of $G_v^k$.
    If $\mathcal{B}$ is not empty, the representation $x_u$ of $u$ is updated as follows:
    \[
        x_u = \text{UPDATE}_{d, across}^{(t)} \bigl( u, \mathcal{B} \bigr)
\]
    Otherwise, if $u$ has no neighbors at the next higher level (\ie $\mathcal{N}_1(u) \cup R_{d+1}(v)$ is empty), then its representation is not updated.
    
    \item Updates \textbf{\textit{within}} a ring of nodes:
    If $u$ has one or more neighbors at the same level of $G_v^k$, and hence $\mathcal{D} = \mathcal{N}_1(u) \cap R_{d}(v)$ is not empty, the representation $x_u$ of $u$ is re-updated as follows:
    \[
        x_u = \text{UPDATE}_{d, within}^{(t)} \bigl( u, \mathcal{D} \bigr)
\]
    Otherwise, if $\mathcal{D} = \emptyset$, its representation is not updated.
\end{itemize}

The proposed approach starts from the most distant nodes and follows a sequential procedure updating the feature vectors of nodes that are gradually closer to the root.
The first type of update (across rings) precedes the second (within a ring).
After all its direct neighbors $u \in \mathcal{N}_1(v)$ have been processed, the hidden state of the root node $v$ is computed as follows:
\[
    h_v^{(t)} = \text{UPDATE}_{0, across}^{(t)} \bigl( v, \mathcal{N}_1(v) \bigr)
\]
As mentioned above, although the model learns a new \textit{inner} representation $x_u$ for some of the nodes in $\mathcal{N}_k(v)$, these representations are only learned for the purpose of updating the root node's hidden state.
Furthermore, it is clear that for a single neighborhood aggregation layer, the proposed model needs at most $2k$ \text{UPDATE} modules.
As we will show next, the proposed model can capture the structural information within the root node's $k$-hop neighborhood even if it comprises of a single neighborhood aggregation layer.
Hence, instead of using multiple neighborhood aggregation layers/iterations, it is more suitable to increase the value of $k$.
The various steps of the proposed model are illustrated in Algorithm~\ref{alg:k_hop_gnn}.
We provide below a simple example that illustrates the update procedure that was presented above.

\begin{algorithm}[t]
    \caption{$k$-hop GNN}
    \label{alg:k_hop_gnn}
    \textbf{Input}: Graph $G = (V,E)$, node features $\{h_v : v \in V\}$, number of neighborhood aggregation layers $T$, number of hops $k$\\
    \textbf{Output}: Node features $\{h_v^{(T)} : v \in V\}$\\
    \begin{algorithmic}[1] 
    %\FOR{$v \in V$}
    %    \STATE $G_v^k \longleftarrow$ extract $k$-hop neighborhood of $v$
    %\ENDFOR
    \FOR{ $t \in \{1,\ldots,T\}$}
        \FOR{$v \in V$}
            \FOR{$u \in R_k(v)$}
                \STATE $\mathcal{D} \longleftarrow \mathcal{N}_1(u) \cap R_{k}(v)$\\
                \STATE $x_u \longleftarrow \text{UPDATE}_{k, within}^{(t)} \big( u, \mathcal{D} \big)$
            \ENDFOR
            \FOR{$i \in \{k-1,\ldots,1\}$}
                \FOR{$u \in R_i(v)$}
                    \STATE $\mathcal{B} \longleftarrow \mathcal{N}_1(u) \cap R_{i+1}(v)$\\
                    \STATE $x_u \longleftarrow \text{UPDATE}_{i, across}^{(t)} \big( u, \mathcal{B} \big)$\\
                    \STATE $\mathcal{D} \longleftarrow \mathcal{N}_1(u) \cap R_{i}(v)$\\
                    \STATE $x_u \longleftarrow \text{UPDATE}_{i, within}^{(t)} \big( u, \mathcal{D} \big)$
                \ENDFOR
            \ENDFOR
            \STATE $h_v^{(t)} = \text{UPDATE}_{0, across}^{(t)} \big( v, \mathcal{N}_1(v) \big)$
        \ENDFOR
    \ENDFOR
    \end{algorithmic}
\end{algorithm}

After $T$ iterations (\ie $T$ neighborhood aggregation layers), the emerging node feature vectors $h_v^{(T)}$ can be used in any node-related task.
For graph-level tasks, the proposed model can compute a vector representation over the whole graph by applying a $\text{READOUT}$ function similar to the one shown in~\eqref{eq:readout}.

\subsection{Example}\label{sec:example}
We next provide a simple example that illustrates the update procedure that was presented above.
Specifically, Figure~\ref{fig:example3} shows the $2$-hop neighborhood graph $G_{v_1}^2$ of a node $v_1 \in V$.
As mentioned above, we first consider the most distant nodes (\ie nodes $v_4$ and $v_5$ since $v_4, v_5 \in R_2(v_1)$).
The representations of nodes $v_4$ and $v_5$ are not updated since these two nodes are at the frontier of $G_{v_1}^2$.
Furthermore, there is no edge between the two nodes.
However, these two nodes contribute to the update of the representation of node $v_2$.
Specifically, the \textit{inner} representation of node $v_2$ is updated as follows:
\[
    x_{v_2} = \text{UPDATE}_{1, across}^{(t)} \bigl(v_2, \{v_4, v_5\} \bigr)
\]
Then, we update the \textit{inner} representations of nodes whose shortest path distance from the root is $1$ and which are connected with other nodes with the same distance from the root.
There is one such pair of nodes (\ie nodes $v_2$ and $v_3$) which are updated as:
\[
    \begin{split}
        x_{v_2} &= \text{UPDATE}_{1, within}^{(t)} \bigl(v_2, \{v_3\} \bigr)\\
        x_{v_3} &= \text{UPDATE}_{1, within}^{(t)} \bigl(v_3, \{v_2\} \bigr) 
    \end{split}
\]
Finally, we update the root by aggregating information from its direct neighbors:
\[
    h_{v_1}^{(t)} = \text{UPDATE}_{0, across}^{(t)} \bigl(v_1, \{v_2, v_3\} \bigr) 
\]

\begin{figure}
    \centering
    \includegraphics[width=.3\textwidth]{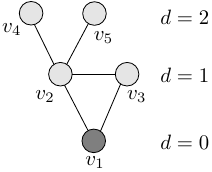}
    \caption{The $2$-hop neighborhood graph $G_{v_1}^2$ of a node $v_1$ of graph $G$.}
    \label{fig:example3}
\end{figure}

\subsection{Expressive Power}

We next study the identifiability of the proposed $k$-hop GNN.
The following Theorem comprises the main results about graph properties that can be identified by the proposed model.
\begin{theorem}
    For the $k$-hop GNN, there exists a sequence of modules $\text{UPDATE}_{0,across}^{(0)}$, $\text{UPDATE}_{1,within}^{(0)}$, $\text{UPDATE}_{1,across}^{(0)}, \ldots \dots, \text{UPDATE}_{k-1, across}^{(T)}$, $\text{UPDATE}_{k,within}^{(T)}$ such that
    \begin{enumerate}
    \setlength\itemsep{0cm}
        \item it can identify triangle-freeness for $k \geq 1$
        \item connectivity for $k > \delta_{min}$ where $\delta_{min}$ is the minimum of the diameters of the connected components
        \item bipartiteness for $k \geq \frac{l-1}{2}$ where $l$ is the length of the smallest odd cycle in the graph (if any)
    \end{enumerate}
    \label{thm:expressiveness}
\end{theorem}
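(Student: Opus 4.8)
The plan is to produce one choice of the perceptrons inside the $\text{UPDATE}$ modules — active in a single neighborhood aggregation layer, say $t=1$, the other layers passing the relevant coordinates through — so that the root representation $h_v^{(1)}$ carries, in three disjoint coordinate blocks, the quantities $t_v$ (the number of triangles through $v$), $f_v=\mathbf 1[\,G_v^k$ has an edge with both endpoints in a single ring $R_d(v)$ with $1\le d\le k\,]$, and $c_v=|\mathcal N_{k-1}(v)|$. A sum-$\text{READOUT}$ then yields an $h_G$ whose corresponding blocks equal $\sum_v 2t_v=6\cdot\#\text{triangles}$ (using $\sum_{u\in R_1(v)}|\mathcal N_1(u)\cap R_1(v)|=2t_v$), $\sum_v f_v$, and $\sum_v c_v$. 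I then read off the three properties: a graph is triangle-free iff the first block is $0$ (for every $k\ge1$); bipartite iff the second block is $0$ (using $2k+1\ge l$, that is $k\ge\frac{l-1}{2}$, for the non-bipartite graph); and a graph on $n$ vertices is connected iff the third block equals $n^2$ (using $k>\delta_{\min}$, so that $k-1\ge\delta_{\min}=\mathrm{diam}$ for the connected graph, whence every $\mathcal N_{k-1}(v)=V$; for a disconnected graph every $c_v\le|\mathrm{comp}(v)|<n$, so $\sum_v c_v\le\sum_C|C|^2<n^2$ since there are at least two components). In each case the block is an indicator-like function of membership in the property, so no $G_1\in\mathcal P_n$ and $G_2\notin\mathcal P_n$ can obtain the same $h_G$, which is exactly identifiability. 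The three blocks are produced by the same module sequence by making each perceptron act block-diagonally, so one construction settles all three items; we include the vertex degree in the (permutation-invariant) initial feature, which the triangle and bipartiteness blocks simply ignore.

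First I would record the structural facts that place each obstruction inside a $k$-hop neighborhood. A triangle through $v$ is an edge of $G$ with both endpoints in $\mathcal N_1(v)=R_1(v)$, hence lies in $G_v^1\subseteq G_v^k$ already for $k\ge1$, and $|\mathcal N_1(u)\cap R_1(v)|$ is exactly the number of triangles on the edge $\{v,u\}$. For odd cycles: take $v$ on a shortest odd cycle $C$ of length $l$; every vertex of $C$ is within distance $\frac{l-1}{2}$ of $v$, so $C\subseteq G_v^k$ once $k\ge\frac{l-1}{2}$, and, walking once around $C$, the BFS level from $v$ changes by $-1,0$ or $+1$ at each step and returns to $0$ after an odd number of steps, so at least one step is flat — that is, $C$ contains an edge $\{x,y\}$ with $x,y\in R_d(v)$ for some $1\le d\le\frac{l-1}{2}$. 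Conversely any within-ring edge at level $d$ yields a closed walk of odd length $2d+1$ through $v$, so a bipartite graph has no within-ring edge in any $G_v^k$. For the ball-size dichotomy I only use $\mathcal N_{k-1}(v)\subseteq\mathrm{comp}(v)$ and $\sum_C|C|^2<(\sum_C|C|)^2$ when there are at least two components.

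Next I would implement the computations along the outside-in sweep of Algorithm~\ref{alg:k_hop_gnn}. Triangle block: let $\text{UPDATE}_{1,within}^{(1)}$ write $|\mathcal D_u|=|\mathcal N_1(u)\cap R_1(v)|$ into a coordinate of $x_u$ (initialised to $0$, so vertices with $\mathcal D_u=\emptyset$ keep $0$), let $\text{UPDATE}_{0,across}^{(1)}$ sum this over $u\in\mathcal N_1(v)$ to obtain $2t_v$, and make every other module the identity there. Bipartiteness block: reserve a $\{0,1\}$-coordinate; each $\text{UPDATE}_{d,within}^{(1)}$ sets it to $1$ on $x_u$ whenever $\mathcal D_u\ne\emptyset$ (via $z\mapsto\min(1,z)$, a difference of two ReLUs applied to the current value plus $|\mathcal D_u|$), and each $\text{UPDATE}_{d,across}^{(1)}$ and $\text{UPDATE}_{0,across}^{(1)}$ propagates it as a monotone $\mathrm{OR}$ from ring $d+1$ and from $x_u$ itself; since the sweep processes ring $d$ (across then within) before the across-step at ring $d-1$, a flag raised by a within-ring edge at level $d$ reaches $v$ along $R_d\to R_{d-1}\to\cdots\to\{v\}$. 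Ball-size block: a vertex $y\in R_{d+1}(v)$ with $d+1\le k-1$ knows, after its own within- and across-updates, its number of level-$d$ neighbors $\mathrm{dd}(y)=\deg_G(y)-|\mathcal D_y|-|\mathcal B_y|$; let each such $y$ pass its accumulated count-value split equally among its $\mathrm{dd}(y)$ level-$d$ neighbors, and let each $w\in R_d(v)$ add one unit for itself; writing $Q_w$ for the value held by $w$ when it is read by its level-$(d-1)$ neighbors, swapping summations gives $\sum_{w\in R_d}Q_w=|R_d(v)|+\sum_{y\in R_{d+1}}Q_y$, which telescopes (with $Q_w=1$ on the outermost counted ring $R_{k-1}(v)$, since level-$k$ vertices get no across-update) to $\sum_{u\in R_1(v)}Q_u=|R_1(v)|+\cdots+|R_{k-1}(v)|=|\mathcal N_{k-1}(v)|-1$ at the root. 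For a fixed $n$ every intermediate quantity above ranges over a finite set, so each of the finitely many maps the modules must realise — in particular the divisions by $\mathrm{dd}(y)$ — agrees with some continuous piecewise-linear function and is therefore computed exactly by a ReLU-MLP.

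The crux, and the step I expect to be the main obstacle, is the connectivity block: sum-aggregation intrinsically over-counts — a vertex reached from $v$ along several shortest paths is aggregated with that multiplicity — so a naive count of $\mathcal N_{k-1}(v)$ fails. What makes the equal-split flow valid is that (a) a vertex at level $d$ never aggregates from level $d-1$, so it can itself report the reciprocal of its number of lower-ring neighbors, turning the over-count into a flow of total mass one per vertex; (b) the fixed outside-in order makes that reciprocal available exactly when the vertex is read downward — which forces us to stop at radius $k-1$ (level-$k$ vertices have no across-update, hence no $|\mathcal B_y|$), and this is precisely why the hypothesis is $k>\delta_{\min}$; and (c) fixing $n$ reduces every arithmetic step to a finite lookup, which ReLU-MLPs realise exactly. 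By comparison the triangle and bipartiteness items are routine once the structural facts are in place: the obstruction there is one edge inside a single ring of $G_v^k$, caught by a within-update and carried to $v$ by a monotone bit through the across-updates. I would close by noting that a single aggregation layer suffices, so the asserted sequence $\text{UPDATE}_{0,across}^{(0)},\text{UPDATE}_{1,within}^{(0)},\dots,\text{UPDATE}_{k,within}^{(T)}$ may be taken active only in layer $t=1$ and identity-like elsewhere.
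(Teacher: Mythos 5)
Your proposal is sound in strategy but takes a genuinely different route from the paper's. The paper first proves an injectivity lemma (its Lemma~2, generalizing the sum-decomposition of Xu et al.) showing the $\text{UPDATE}$ modules can injectively encode pairs (node, multiset of neighbours), and then argues abstractly: in a graph violating (resp.\ satisfying) the property some node triggers an update that never fires in the other graph --- a within-ring update at level $1$ for a triangle, a within-ring update for an odd cycle via a structural lemma that is exactly your parity observation (its Lemma~3), and a different pattern of empty versus non-empty rings for connectivity --- so by injectivity the roots, and via an injective readout the graphs, receive different representations. You instead construct the modules explicitly so that a sum readout carries interpretable statistics ($6\cdot\#\text{triangles}$, a within-ring-edge indicator, and $\sum_v|\mathcal{N}_{k-1}(v)|$) and decide the property by thresholding. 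This is more constructive (it yields quantities, not just distinguishability) at the price of having to verify the sweep order and exact realizability of each arithmetic step, which your fixed-$n$ finite-lookup argument handles. Note also that you read the connectivity condition with $\delta_{min}$ equal to the diameter of the connected graph (so $k-1\ge \mathrm{diam}$), whereas the paper takes $\delta_{min}$ from the disconnected graph; your reading is consistent with the statement and your argument under it needs nothing about the disconnected graph beyond $c_v\le|\mathrm{comp}(v)|<n$, which is in fact a cleaner comparison than the paper's.

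There is one concrete boundary case in your connectivity gadget that fails as written: a node $y$ at level $d+1\le k-1$ all of whose neighbours lie at level $d$ has $\mathcal{B}_y=\mathcal{D}_y=\varnothing$, so by the model's rules it receives neither an across- nor a within-update; when its level-$d$ neighbours read it in their across step they see only its initial feature, so $y$ contributes $0$ rather than $Q_y/\mathrm{dd}(y)=1/\deg(y)$, and the telescoped sum undercounts $|\mathcal{N}_{k-1}(v)|$ (take the far endpoint of a path $v$--$a$--$y$ with $k=3$). This breaks the exact identity $\sum_v c_v=n^2$ you need on the connected side, while leaving the disconnected-side inequality intact. The repair stays entirely within your framework: initialize the reserved flow coordinate of every node to $1/\deg(u)$ (the degree is already in your permutation-invariant initial feature) and let updated nodes overwrite it, or add an ``updated'' indicator bit and let the across modules' $\text{MLP}_2$ output $1/\deg(y)$ when the bit is unset; on the finite value set this is again exactly representable by a ReLU MLP. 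A similar, essentially covered, remark applies to nodes with $\mathcal{B}_y=\varnothing$ but $\mathcal{D}_y\neq\varnothing$, whose within module must itself set $Q_y=1$ and $\mathrm{dd}(y)=\deg(y)-|\mathcal{D}_y|$. With this patch the connectivity block is correct; the triangle and bipartiteness blocks are unaffected, since their reads are order-independent constant-one sums and monotone flags, as you observe.
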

\begin{proof}
    The proof is left to the Appendix.
\end{proof}
It should be mentioned that there have already been proposed GNNs that update the representations of the nodes based on their $k$-hop neighborhoods.
Such GNNs employ polynomials of order $k$~\cite{defferrard2016convolutional} or auto-regressive moving average (ARMA) filters~\cite{bianchi2019graph} to approximate a transfer function that acts on the
eigenvalues of the normalized Laplacian matrix, and can be shown that they utilize information from the nodes' $k$-hop neighborhoods.
However, in our experimental evaluation, these models fail to consistently capture the above three properties.
Furthermore, other methods take into account higher-order neighborhoods by computing powers of the adjacency matrix~\cite{abu2019mixhop}.

\subsection{Computational Complexity}\label{sec:complexity}
The increase of expressiveness provided by the $k$-hop GNN model does not come without a price.
Clearly the time complexity of the proposed model is higher than that of the standard GNN. The computational steps of the proposed method consist of two parts:
\begin{enumerate}
    \item \textbf{Preprocessing step:} In this phase, the model extracts the neighborhood subgraphs of all nodes.
    For each node, the neighborhood subgraph can be extracted in linear time in the number of edges of the neighborhood.
    Hence, the complexity of the preprocessing step is $\mathcal{O}(nm)$ in the worst case (\ie for a complete graph).
    For sparse graphs, it can become significantly lower, \ie $\mathcal{O}(n \, \bar{d}^k)$ where $\bar{d}$ is the average degree of the nodes.
    We should note that this step is computed only once.
    \item \textbf{Message passing procedure:} To compute the representation of each node, for each edge of its neighborhood subgraph, a message needs to be sent from some node to another node.
    Therefore, the complexity of one message passing iteration (\ie one epoch in our implementation) is $\mathcal{O}(nm)$ in the worst case and $\mathcal{O}(n \, \bar{d}^k)$ for sparse graphs.
    Note however, that the proposed message passing layer allows a GPU-friendly implementation.
    Therefore, in practice, as verified by our experiments, the running time is not prohibitive.
\end{enumerate}
To sum up, the total computational complexity of $k$-hop Graph Neural Network is $\mathcal{O}(nm)$, while for sparse graphs is $\mathcal{O}(n \, \bar{d}^k)$.
On the other hand, the complexity of the standard GNN model is $\mathcal{O}(km)$ where $k$ is the number of message passing iterations.
For sparse graphs, and when the number of hops $k$ is set to a small value, the complexity of the proposed model is comparable to the complexity of the standard GNNs.

\section{Experimental Evaluation}\label{sec:experiments}
In this Section, we evaluate the performance of the proposed $k$-hop GNN in two tasks: ($1$) node classification, ($2$) graph classification. 

\subsection{Node Classification}

The main objective of node classification is to assign class labels to unlabeled nodes.
We evaluate the proposed model on synthetic graphs with planted structural equivalences and on one real-world dataset.
Each node $v_i \in V$ has an associated class label $y_i$ and the goal is to learn a representation vector $h_{v_i}^{(T)}$ of $v_i$ such that $v_i$'s label can be predicted as $y_i = f(h_{v_i}^{(T)})$.

\subsubsection{Synthetic Datasets}\label{sec:experiments_node_synthetic}

\noindent\textbf{Datasets.}
To generate the graphs, we follow the same procedure as in~\cite{donnat2018learning}.
Structurally equivalent nodes are assigned the same class labels.

All the generated graphs consist of a cycle of length $40$ and some basic shapes (``house'', ``fan'', ``star'') which are regularly placed along the cycle.
In the ``basic'' setup, $10$ instances of only one of the three types (randomly chosen with uniform probability) are placed along the cycle.
In the ``varied'' setup, $10$ instances of each one of the three shapes are randomly placed along the cycle.
The use of multiple shapes increases the number and complexity of the structural role patterns, posing a challenge to the learning algorithms.
To assess how the algorithms perform in noisy scenarios, we introduce two additional configurations (``basic perturbed'' and ``varied perturbed'') where we add edges uniformly at random on the generated graphs.
The number of edges that are added is equal to $10\%$ of the edges of the graph.
The shapes that are placed along the cycle graph in the different setups are illustrated in Table~\ref{tab:shapes}.

\begin{table}[t]
    \centering
    \scriptsize
    \def\arraystretch{1.1}
    \begin{sc}
    \begin{tabular}{|l|c|} \hline
    Configuration & Shapes placed along a cycle graph \\ \hline
    \multirow{5}{*}{basic} & \multirow{5}{*}{\includegraphics[width=.4\textwidth]{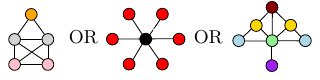}} \\ 
    & \\
    & \\
    & \\
    & \\ \hline
    \multirow{5}{*}{\parbox{1cm}{basic\\ perturbed}} & \multirow{5}{*}{\includegraphics[width=.4\textwidth]{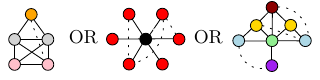}} \\
    & \\
    & \\
    & \\
    & \\ \hline
    \multirow{5}{*}{varied} & \multirow{5}{*}{\includegraphics[width=.4\textwidth]{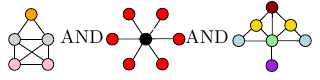}} \\ 
    & \\
    & \\
    & \\
    & \\ \hline
    \multirow{5}{*}{\parbox{1cm}{varied\\ perturbed}} & \multirow{5}{*}{\includegraphics[width=.4\textwidth]{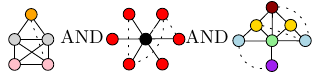}} \\ 
    & \\
    & \\
    & \\
    & \\ \hline
    \end{tabular}
    \end{sc}
    \caption{Example of synthetically generated structures for each configuration. The different colors denote structurally equivalent nodes. Dashed lines denote perturbed graphs (obtained by randomly adding edges).}
    \label{tab:shapes}
\end{table}

\noindent\textbf{Baselines.}
We compare the proposed model against $3$ recent state-of-the-art techniques for learning structural node representations: ($1$) RolX~\cite{henderson2012rolx}, ($2$)
struc2vec~\cite{ribeiro2017struc2vec}, and ($3$) GraphWave~\cite{donnat2018learning}.
Note that these $3$ algorithms are unsupervised, in the sense that they only take a graph as input and not any class labels of the nodes.
We also compare the $k$-hop GNN model against the standard GNN architecture as described in Equation~(\ref{eq:gnn_general}), and the ChebNet~\cite{defferrard2016convolutional} and ARMA~\cite{bianchi2019graph} models
which also aggregate information from the nodes' $k$-hop neighborhoods.
The ChebNet and ARMA models were implemented using the Pytorch Geometric library~\cite{fey2019fast}.

\noindent\textbf{Experimental setup.}
For each configuration, we generate $20$ graphs using the procedure described above.
For each graph, we perform $10$-fold cross validation.
We repeat the whole process $25$ times.
We measure the performance of the different algorithms using the following two evaluation metrics: ($1$) average accuracy and ($2$) average F$1$-score.

For the unsupervised algorithms, we learn an embedding for each node, and we predict the class label of each node in the test set using a $4$-nearest neighbors classifier.
For the proposed $k$-hop GNN model, the standard GNN model, ChebNet and ARMA, we train the models on the training set of each fold and use the models to classify the nodes of the test set.

For all the unsupervised algorithms, we use the default parameter values.
Specifically, for struc2vec, we set the probability that the random walks stays in current layer to 0.3, the dimensionality of the embeddings to $128$, the number of epochs to $5$, the number of walks per node to $10$, the walk length to $80$ and the context window size to $10$.
Furthermore, we make use of all approximations OPT1, OPT2, and OPT3.
For GraphWave, we use the multiscale version, set $d=50$ and use evenly spaced sampling points $t_i$ in range $[0,100]$.
Finally, for RolX, we did not use any approach for automatically detecting the number of different roles, but we directly provided the algorithm with the correct number of roles.
We use $2$ and $3$ neighborhood aggregation layers for the standard GNN, and $1$ layer for ChebNet, ARMA and the proposed $2$-hop and $3$-hop GNNs.
For ChebNet, we use polynomials of order $2$ and $3$, and for ARMA, we set the number of stacks $K$ to $2$ and the depth $T$ to $2$ and $3$.
The neighborhood aggregation layers of the $k$-hop GNN models and of the standard GNN models consist of MLPs with $2$ layers.
Batch normalization is applied to the output of every neighborhood aggregation layer.
The hidden-dimension size of the MLPs is chosen from $\{8,16,32\}$ and the dropout rate from $\{0.0, 0.2\}$.
To train all neural networks, we use the Adam optimizer with initial learning rate 0.01 and decay the learning rate by 0.5 every $50$ epochs.
We set the number of epochs to $200$.
We store the model that achieved the best validation accuracy into disk.
At the end of training, the model is retrieved from the disk, and we use it to classify the test instances.

\begin{table}[t]
    \centering
    \def\arraystretch{1.1}
    \begin{sc}
    \resizebox{\textwidth}{!}{
    \begin{tabular}{|l|c|c|c|c|c|c|c|c|} \hline
    \multirow{3}{*}{Method} & \multicolumn{8}{c|}{Configuration} \\ \cline{2-9}
    & \multicolumn{2}{c|}{basic} & \multicolumn{2}{c|}{basic perturbed} & \multicolumn{2}{c|}{varied} & \multicolumn{2}{c|}{varied perturbed} \\ \cline{2-9}
    & Accuracy & F$1$-score & Accuracy & F$1$-score & Accuracy & F$1$-score & Accuracy & F$1$-score \\ \hline
    RolX & \textbf{1.000} & \textbf{1.000} & 0.928 & 0.886 & 0.998 & 0.996 & 0.856 & 0.768 \\
    struc2vec & 0.784 & 0.708 & 0.703 & 0.632 & 0.738 & 0.592 & 0.573 & 0.412 \\
    GraphWave & 0.995 & 0.993 & 0.906 & 0.861 & 0.982 & 0.965 & 0.793 & 0.682 \\
    2-GNN & 0.997 & 0.994 & 0.920 & 0.876 & 0.990 & 0.979 & 0.852 & 0.753 \\
    3-GNN & 0.997 & 0.994 & 0.911 & 0.859 & 0.993 & 0.985 & 0.866 & 0.775 \\ 
    ChebNet (K=2) & 0.988 & 0.979 & 0.866 & 0.787 & 0.852 & 0.732 & 0.624 & 0.471  \\
    ChebNet (K=3) & 0.992 & 0.987 & 0.904 & 0.850 & 0.958 & 0.917 & 0.758 & 0.612 \\ 
    ARMA (T=2) & 0.996 & 0.992 & 0.914 & 0.861 & 0.982 & 0.961 & 0.839 & 0.728 \\ 
    ARMA (T=3) & 0.997 & 0.996 & 0.919 & 0.872 & 0.993 & 0.987 & 0.850 & 0.747 \\ \hline
    2-hop GNN & \textbf{1.000} & \textbf{1.000} & 0.961 & \textbf{0.934} & \textbf{0.999} & \textbf{0.999} & 0.948 & 0.910 \\
    3-hop GNN & \textbf{1.000} & \textbf{1.000} & \textbf{0.962} & \textbf{0.934} & 0.996 & 0.993 & \textbf{0.952} & \textbf{0.916} \\ \hline
    \end{tabular}
    }
    \end{sc}
    \caption{Performance of the baselines and the proposed $k$-hop GNN models for learning structural embeddings averaged over $20$ synthetically generated graphs for each configuration.}
    \label{tab:shapes_results}
\end{table}

\noindent\textbf{Results.}
Table~\ref{tab:shapes_results} shows that the instances of the proposed $k$-hop GNN architecture outperform all the baselines in the node classification task.
Interestingly, struc2vec is the worst-performing method in all configurations.
Specifically, in the least challenging configuration (``basic''), the $k$-hop GNN models, and RolX perform the best.
All these methods yield perfect performance, while the standard GNN models, the ChebNet and ARMA models, and GraphWave exhibit slightly worse performance.
In the presence of noise (``basic perturbed'' configuration), the performance of all methods degrades a lot.
The $3$-hop GNN model is the best-performing method followed by the $2$-hop GNN model, RolX, and the $2$-GNN model, in that order.
The remaining models achieve slightly lower accuracies and F$1$-scores than the above $4$ methods.
In the ``varied'' configuration, the $k$-hop GNN models are once again the best-performing methods along with RolX.
In the ``varied perturbed'' configuration, the $3$-hop GNN model yields the best performance.
The $2$-hop GNN model achieves slightly worse performance, while the performance of the remaining methods is much lower.
From the two noisy configurations, it is clear that the baseline methods are more prone to noise compared to the $k$-hop GNN model.
Furthermore, with regards to the two instances of the $k$-hop GNN model, the $3$-hop GNN model outperforms the $2$-hop GNN model in two configurations (\ie ``basic perturbed'' and ``varied perturbed''), is outperformed by the $2$-hop GNN model in one configuration (\ie ``varied''), while the two models achieve the same performance in the ``basic'' configuration.
  Finally, the proposed $k$-hop GNN models outperform the two standard GNN models in all experiments, thereby validating our theoretical results.
Overall, the proposed $k$-hop GNN model is robust and achieves good performance, demonstrating that it can learn high-quality node representations.

\subsubsection{Real-World Dataset}

\noindent\textbf{Datasets.}
We study the Enron dataset, an e-mail network encoding communication between employees in a company.
We expect structural equivalences in job titles due to corporate organizational hierarchy.
Nodes of the network represent Enron employees and edges correspond to e-mail communication between the employees.
There are $143$ nodes and 2583 edges in the emerging network.
An employee has one of seven functions in the company (\eg CEO, manager, etc.).
These functions provide ground-truth information about roles of the corresponding nodes in the network.

\noindent\textbf{Baselines.}
We compare the proposed model against the $9$ baseline algorithms which were presented above: ($1$) RolX, ($2$) struc2vec, ($3$) GraphWave, ($4$) $2$-GNN, ($5$) $3$-GNN, ($6$) ChebNet ($K=2$), ($7$) ChebNet ($K=3$), ($8$) ARMA ($T=2$), and ($9$) ARMA ($T=3$).

\begin{table}[t]
    \centering
    \scriptsize
    \def\arraystretch{1.1}
    \begin{sc}
    \begin{tabular}{|l|c|c|} \hline
    Method & Accuracy & F$1$-score \\ \hline
    RolX & 0.264 & 0.154 \\
    struc2vec & 0.323 & 0.190 \\
    GraphWave & 0.257 & 0.149 \\
    2-GNN & 0.357 & 0.183 \\
    3-GNN & 0.366 & 0.195 \\ 
    ChebNet (K=2) & 0.342 & 0.179 \\
    ChebNet (K=3) & 0.360 & 0.191 \\ 
    ARMA (T=2) & 0.374 & 0.192 \\ 
    ARMA (T=3) & \textbf{0.376} & 0.190 \\ \hline
    2-hop GNN & 0.366 & \textbf{0.198} \\
    3-hop GNN & 0.327 & 0.171 \\ \hline
    \end{tabular}
    \end{sc}
    \caption{Performance of the baselines and the proposed $k$-hop GNN models for learning structural embeddings on the Enron dataset.}
    \label{tab:enron}
\end{table}

\noindent\textbf{Experimental setup.}
We perform $10$-fold cross validation, and repeat the whole process $20$ times.
For each algorithm, we report ($1$) its average accuracy, and ($2$) its average F$1$-score.

For the unsupervised algorithms, we first learn an embedding for each node, and then, for each fold, we use a $4$-nearest neighbors classifier to predict the job titles of the nodes of the test set.
For the proposed $k$-hop GNN model, the standard GNN model, ChebNet and ARMA, we train the models on the training set of each fold and use the models to classify the nodes of the test set.

For all algorithms, we set/optimize their hyperparameters as described in subsection~\ref{sec:experiments_node_synthetic}.

\noindent\textbf{Results.}
We can see in Table~\ref{tab:enron} that the supervised neural network models outperformed the unsupervised algorithms on this dataset.
From the unsupervised algorithms, only struc2vec achieved performance comparable to that of the supervised models.
In terms of accuracy, ARMA is the best-performing method.
Both ARMA ($T=2$) and ARMA ($T=3$) outperform all the other methods on the Enron dataset.
On the other hand, in terms of F$1$-score, $2$-hop GNN achieves the best performance among the different methods, followed by $3$-GNN and ARMA ($T=2$).
Surprisingly, the $3$-hop GNN model performs much worse than the $2$-hop GNN model.
We hypothesize that this is related to the structure of the e-mail network.

\subsection{Graph Classification}

We next apply the proposed model to the problem of graph classification., \ie the supervised learning task of assigning a graph to a set of predefined categories.
Specifically, given a set of graphs $\{G_1,\ldots,G_N \} \subseteq \mathcal{G}$ and their class labels $\{y_1,\ldots,y_N\}$, the goal is to learn a representation vector $h_{G_i}$ such that the class label of every graph of the test set can be predicted as $y_{i} = f(h_{G_i})$.
For this task, we are going to evaluate the performance of the proposed model in two different types of datasets: ($1$) synthetic datasets containing graphs that satisfy or do not satisfy the considered graph properties, and ($2$) standard widely-used datasets from real-world scenarios.

\subsubsection{Synthetic Datasets}

\noindent\textbf{Datasets.}
In order to investigate if the proposed model can distinguish triangle-freeness, bipartiteness and connectivity, we created three synthetic datasets.
Each one consists of $800~4$-regular graphs of 60 nodes each and is assigned a class label which denotes whether it satisfies the corresponding property or not (\ie binary classification task).
All the nodes are assigned identical labels.
Furthermore, all three datasets are balanced, \ie half of the graphs ($400$ graphs) satisfy the examined graph property, while the rest of the graphs ($400$ graphs) do not satisfy it.

\noindent\textbf{Baselines.}
We compare our model against the standard GNN architecture of Equations~(\ref{eq:gnn_general}) and (\ref{eq:readout}), and against the ChebNet~\cite{defferrard2016convolutional} and ARMA~\cite{bianchi2019graph} models.
Based on our theoretical results, we expect the standard GNN model to perform worse than the proposed model on these $3$ datasets.
On the other hand, since the ChebNet and ARMA models aggregate information based on the $k$-hop neighborhood of each node, these models may be able to capture these properties.

\noindent\textbf{Experimental setup.}
For ChebNet, ARMA and the proposed $k$-hop GNN models ($2$-hop GNN and $3$-hop GNN), we used a single neighborhood aggregation layer, while for the standard GNN, we used $2$ and $3$ layers.
The hidden-dimension size of these layers was chosen from $\{16, 32, 64\}$. 
For ChebNet, we used polynomials of order $2$ and $3$, and for ARMA, we set the number of stacks $K$ to $2$ and the depth $T$ to $2$ and $3$.
To generate graph representations, we employed a readout function that sums the vector representations of the nodes.
The generated graph representations are then fed into a two layer MLP, with a softmax output.
We used the \texttt{ReLU} activation function, and set the batch size to $32$.
We used the Adam optimizer with a learning rate of $10^{-3}$, while we set the number of epochs to $100$.

\begin{table}[t]
    \centering
    \scriptsize
    \def\arraystretch{1.1}
    \begin{sc}
    \begin{tabular}{|l|ccc|} \hline
    \multirow{3}{*}{Method} & \multicolumn{3}{c|}{Property} \\ \cline{2-4}
    & \multirow{2}{*}{Connectivity} & \multirow{2}{*}{Bipartiteness} & Triangle \\
    & & & freeness \\ \hline \hline
    2-GNN & 55.00 $\pm$ 5.30 & 53.78 $\pm$ 2.61 & 51.87 $\pm$ 7.43 \\
    3-GNN & 56.20 $\pm$ 2.28 & 58.13 $\pm$ 2.10 & 55.90 $\pm$ 4.44 \\
    ChebNet (K=2)& 56.37 $\pm$ 7.76 & 50.33 $\pm$ 1.20 & 53.12 $\pm$ 6.35 \\
    Chebnet (K=3) & 57.62 $\pm$ 3.84 & 51.98 $\pm$ 3.56 & 54.75 $\pm$ 7.14 \\
    ARMA (T=2) & 55.55 $\pm$ 5.59  & 54.50 $\pm$ 4.61 & 53.00 $\pm$ 3.18 \\
    ARMA (T=3) & 55.63 $\pm$ 5.69 & 53.92 $\pm$ 3.22  & 54.25 $\pm$ 5.80 \\
    \hline \hline
    2-hop GNN & 81.24 $\pm$ 5.22 & 84.69 $\pm$ 1.74 & \textbf{84.06} $\pm$ 2.12 \\ 
    3-hop GNN & \textbf{94.77} $\pm$ 3.41 & \textbf{91.12} $\pm$ 2.76 & 82.53 $\pm$ 5.33 \\ \hline
    \end{tabular}
    \end{sc}
    \caption{Average classification accuracy of the proposed $k$-hop GNN models and the baselines on the $3$ synthetic datasets.}
    \label{tab:classification_synthetic}
\end{table}

\noindent\textbf{Results.}
We report in Table~\ref{tab:classification_synthetic} average prediction accuracies across the $10$ folds.
It is clear that the standard GNN architectures are unable to distinguish the $3$ graph properties.
Specifically, they all achieve an average accuracy slightly greater than $50\%$ on all three datasets. The ChebNet and ARMA models, even though they aggregate information from each node's $k$-hop neighborhood, they are also unable to distinguish the $3$ properties.
On the other hand, the proposed $k$-hop GNN architectures achieved much higher average accuracies, indicating that the proposed architecture can distinguish the 3 properties in regular graphs.
In the case of bipartiteness and connectivity, the $3$-hop GNN model achieved very high accuracy.
The performance of the $2$-hop GNN model was slightly worse than that of the $3$-hop GNN model.
However, the former managed to better distinguish triangle-free graphs than the latter.
Overall, we can conclude that in contrast to the GNN, ChebNet and ARMA architectures, the proposed model leads to more expressive node representations.

\subsubsection{Real-World Datasets}

We also evaluate the proposed $k$-hop GNN model on standard graph classification datasets derived from bio/chemoinformatics, and from social networks.

\noindent\textbf{Datasets.}
We use the following $3$ datasets from bioinformatics and chemoinformatics: ($1$) MUTAG, ($2$) PROTEINS, ($3$) NCI1.
We also use the following $2$ social interaction datasets: ($1$) IMDB-BINARY, ($2$) IMDB-MULTI.
We next give more details about the employed datasets\footnote{The datasets, further references and statistics are available at \url{https://ls11-www.cs.tu-dortmund.de/staff/morris/graphkerneldatasets}.}.

MUTAG consists of $188$ mutagenic aromatic and heteroaromatic nitro compounds.
The task is to predict whether or not each chemical compound has mutagenic effect on the Gram-negative bacterium Salmonella typhimurium.
PROTEINS comprises of 1113 proteins represented as graphs where vertices are secondary structure elements and there is an edge between two vertices if they are neighbors in the amino-acid sequence or in 3D space.
The task is to classify proteins into enzymes and non-enzymes.
NCI1 contains 4110 chemical compounds screened for activity against non-small cell lung cancer and ovarian cancer cell lines.
IMDB-BINARY and IMDB-MULTI contain 1000 and 1500 movie collaboration graphs, respectively.
Each graph is the ego-network of an actor/actress, and the task is to predict which genre an ego-network belongs to.

Note that the social network graphs are unlabeled, while all other graph datasets come with discrete node labels.
Hence, for the social network graphs, we assign an one-dimensional feature to each node which is equal to its degree, while for the remaining datasets, we use one-hot encodings of the node labels.

Table~\ref{tab:dataset_statistics} provides a summary of the employed datasets.
\begin{table}[t]
    \centering
    \scriptsize
    \begin{sc}
    \def\arraystretch{1.2}
    \begin{tabular}{|l|c|c|d{3.2}|d{3.2}|d{5.2}|cc|} \hline
    \multirow{3}{*}{Dataset} & \multicolumn{7}{c|}{\multirow{1}{*}{Statistics}} \\ \cline{2-8} 
    & \multirow{2}{*}{\#Graphs} & \multirow{2}{*}{\#Classes} & \multicolumn{1}{c|}{Max Class} & \multicolumn{1}{c|}{Avg.} & \multicolumn{1}{c|}{Avg.} & \multicolumn{2}{c|}{Labels} \\
    & & & \multicolumn{1}{c|}{Imbalance} & \multicolumn{1}{c|}{\#Nodes} & \multicolumn{1}{c|}{\#Edges} & \multicolumn{2}{c|}{(Num.)} \\ \hline
    MUTAG                         & 188             & 2      & 1:1.98  & 17.93                & 19.79                & + & (7)  \\ \hline
    PROTEINS                      & 1,113           & 2      & 1:1.47  & 39.06                & 72.82                & + &(3)   \\ \hline
    NCI1                          & 4,110           & 2      & 1:1     & 29.87                & 32.30                & + & (37) \\ \hline
    IMDB-BINARY                   & 1,000           & 2      & 1:1     & 19.77                & 96.53                & --&      \\ \hline
    IMDB-MULTI                    & 1,500           & 3      & 1:1     & 13.00                & 65.94                & --&      \\ \hline
    \end{tabular}
    \end{sc}
    \caption{Summary of the $5$ datasets used in our experiments. The ``Max Class Imbalance'' column indicates the ratio of the size of the smallest class of the dataset to the size of its largest class.}
    \label{tab:dataset_statistics}
\end{table}

\noindent\textbf{Baselines.}
We compare our methods against four graph kernels: ($1$) the graphlet kernel (GK)~\cite{shervashidze2009efficient}, ($2$) the shortest-path kernel (SP)~\cite{borgwardt2005shortest}, ($3$) the
Weisfeiler--Lehman subtree kernel (WL)~\cite{shervashidze2011weisfeiler}, and ($4$) the Weisfeiler--Lehman Optimal Assignment kernel (WL-OA)~\cite{kriege2016valid}.
The first three kernels are available in the GraKeL library~\cite{siglidis2020grakel}, while for WL-OA we used the code provided by the authors.
Besides graph kernels, we also compare the proposed model against the basic GNN architecture of Equations~(\ref{eq:gnn_general}) and (\ref{eq:readout}), against GS-SVM~\cite{gao2019geometric} which makes use of geometric scattering features, and against the following state-of-the-art deep learning architectures: ($1$) ChebNet~\cite{defferrard2016convolutional}, ($2$) ARMA~\cite{bianchi2019graph}, ($3$)
PatchySan~\cite{niepert2016learning}, ($4$) Deep Graph CNN (DGCNN)~\cite{zhang2018end}, ($5$) CapsGNN~\cite{xinyi2018capsule}, and ($6$)
$1$-$2$-$3$-GNN~\cite{morris2018weisfeiler}.
For ChebNet and ARMA, we use the implementations contained in the PyTorch Geometric library~\cite{fey2019fast}.
For GS-SVM and the rest of the deep learning methods, we compare against the accuracies reported in the original papers.

\noindent\textbf{Experimental Setup.}
We performed $10$-fold cross-validation where $10\%$ of the graphs of each training fold was used as a validation set.
The whole process was repeated $10$ times for each dataset and each approach.

We chose parameters for the graph kernels as follows.
For the Weisfeiler--Lehman subtree kernel and for the Weisfeiler--Lehman optimal assignment kernel, we chose the number of iterations from $h=\{4,5,6,7\}$, while the graphlet kernel that we implemented samples $500$ graphlets of size up to $6$ from each graph.
For ChebNet, ARMA and the proposed $k$-hop GNN models ($2$-hop GNN and $3$-hop GNN), we used a single neighborhood aggregation layer, while for the standard GNN, we used $2$ and $3$ layers.
The parameters of the neighborhood aggregation layers correspond to MLPs with $2$ layers.
Batch normalization is applied to the output of every neighborhood aggregation layer.
The hidden-dimension size of the MLPs was chosen from $\{16, 32, 64\}$.
For ChebNet, we used polynomials of order $2$ and $3$, and for ARMA, we set the number of stacks $K$ to $2$ and the depth $T$ to $2$ and $3$.
To generate graph representations, we employed a readout function that sums the vector representations of the nodes.
The generated graph representations are then fed into a two layer MLP, with a softmax output.
We used the \texttt{ReLU} activation function, and we chose the batch size from $\{32, 64, 128\}$.
We used the Adam optimizer with an initial learning rate of $10^{-3}$ and decay the learning rate by $0.5$ every $50$ epochs.
We set the number of epochs to $500$, and we select the epoch with the best validation accuracy.
The source code of the proposed model is available at \url{https://github.com/giannisnik/k-hop-gnns}.

\begin{table}[t]
    \centering
    \def\arraystretch{1.1}
    \begin{sc}
    \resizebox{\textwidth}{!}{
    \begin{tabular}{|l|cccccc|} \hline
    \multirow{2}{*}{\backslashbox{Method}{Dataset}} & \multirow{2}{*}{MUTAG} & \multirow{2}{*}{PROTEINS} & \multirow{2}{*}{NCI$1$} & IMDB & IMDB & Average \\
    & & & & BINARY & MULTI & Rank \\ \hline \hline
    GK & 69.97 ($\pm$ 2.22) & 71.23 ($\pm$ 0.38) & 65.47 ($\pm$ 0.14) & 60.33 ($\pm$ 0.25) & 36.53 ($\pm$ 0.93) & 16.8 \\
    SP & 84.03 ($\pm$ 1.49) & 75.36 ($\pm$ 0.61) & 72.85 ($\pm$ 0.24) & 60.21 ($\pm$ 0.58) & 39.62 ($\pm$ 0.57) & 12.8 \\
    WL & 83.63 ($\pm$ 1.57) & 73.12 ($\pm$ 0.52) & 84.42 ($\pm$ 0.25) & 73.36 ($\pm$ 0.38) & \textbf{51.06} ($\pm$ 0.47) & 6.8 \\
    WL-OA & 86.63 ($\pm$ 1.49) & 75.35 ($\pm$ 0.45) & \textbf{85.74} ($\pm$ 0.37) & 73.61 ($\pm$ 0.60) & 50.48 ($\pm$ 0.33) & 3.0 \\ \hline \hline
    GS-SVM & 83.57 ($\pm$ 6.75) & 74.11 ($\pm$ 4.02) & 79.14 ($\pm$ 1.28) & 71.20 ($\pm$ 3.25) & 48.73 ($\pm$ 2.32) & 10.8 \\ \hline \hline
    2-GNN & 85.92 ($\pm$ 2.19) & 75.24 ($\pm$ 0.45) & 76.32 ($\pm$ 0.41) & 71.40 ($\pm$ 0.74) & 47.73 ($\pm$ 0.86) & 8.8 \\
    3-GNN & 85.74 ($\pm$ 1.48) & 74.59 ($\pm$ 0.71) & 79.62 ($\pm$ 0.45) & 71.60 ($\pm$ 0.84) & 47.33 ($\pm$ 1.01) & 9.4 \\
    ChebyNet (K=2) & 85.33 ($\pm$ 1.42) & 74.72 ($\pm$ 0.97) & 78.97 ($\pm$ 0.35) & 71.08 ($\pm$ 0.51) &  47.08 ($\pm$ 0.60) & 10.6 \\
    ChebyNet (K=3) & 82.49 ($\pm$ 1.52) & 74.81 ($\pm$ 0.82) & 81.01 ($\pm$ 0.39) & 70.90 ($\pm$ 0.73) & 46.66 ($\pm$ 0.59) & 10.8 \\
    ARMA (T=2) & 82.98 ($\pm$ 1.90) & 74.84 ($\pm$ 0.59) & 80.83 ($\pm$ 0.42) & 70.62 ($\pm$ 0.95) & 46.10 ($\pm$ 0.82) & 11.2 \\
    ARMA (T=3) & 81.52 ($\pm$ 1.22) & 74.74 ($\pm$ 0.67) & 81.34 ($\pm$ 0.38) & 70.52 ($\pm$ 0.71) & 46.12 ($\pm$ 0.98) & 11.6 \\
    PatchySan ($k=10$) & \textbf{88.95} ($\pm$ 4.37) & 75.00 ($\pm$ 2.51) & 76.34 ($\pm$ 1.68) & 71.00 ($\pm$ 2.29) & 45.23 ($\pm$ 2.84) & 9.6 \\
    DGCNN & 85.83 ($\pm$ 1.66) & 75.54 ($\pm$ 0.94) & 74.44 ($\pm$ 0.47) & 70.03 ($\pm$ 0.86) & 47.83 ($\pm$ 0.85) & 9.6 \\ 
    CapsGNN & 86.67 ($\pm$ 6.88) & \textbf{76.28} ($\pm$ 3.63) & 78.35 ($\pm$ 1.55) & 73.10 ($\pm$ 4.83) & 50.27 ($\pm$ 2.65) & 5.0 \\
    1-2-3-GNN & 86.1 & 75.5 & 76.2 & \textbf{74.2} & 49.5 & 6.0 \\ \hline \hline
    2-hop GNN & 87.93 ($\pm$ 1.22) & 75.03 ($\pm$ 0.42) & 79.31 ($\pm$ 0.57) & 73.33 ($\pm$ 0.30) & 49.79 ($\pm$ 0.25) & 5.4 \\
    3-hop GNN & 87.56 ($\pm$ 0.72) & 75.28 ($\pm$ 0.36) & 80.61 ($\pm$ 0.34) &  - &  - & 4.8 \\ \hline
    \end{tabular}
    }
    \end{sc}
    \caption{Average classification accuracy ($\pm$ standard deviation) of the baselines and the proposed $k$-hop GNN models on the $5$ graph classification benchmark datasets.  The  ``Average Rank'' column illustrates the average rank of each method. The lower the average rank, the better the overall performance of the method.}
    \label{tab:classification}
\end{table}

\noindent\textbf{Results.}
We report in Table~\ref{tab:classification} average prediction accuracies and standard deviations across the $10$ repetitions.
Note that the graphs contained in the IMDB-BINARY and IMDB-MULTI datasets correspond to the ego-networks of actors/actresses.
The diameter of these graphs is at most equal to $2$, and therefore, the $3$-hop neighborhoods of the nodes are identical to their $2$-hop neighborhoods.
This is why we do not report the performance of the $3$-hop GNN on these datasets.

In general, we observe that the variants of the proposed model achieve high levels of performance.
Specifically, they achieve the second best performance on MUTAG, the fourth best performance on IMDB-BINARY and IMDB-MULTI, and the sixth best performance on the NCI1 and PROTEINS datasets.
On most datasets, the proposed model yields only slightly worse accuracies compared to the best performing method, the WL-OA kernel.
Interestingly, the two $k$-hop GNN models perform equally well in general.
More specifically, the $3$-hop GNN achieves slightly better accuracy than the $2$-hop GNN on most datasets.
However, the difference in performance is not very large.
Furthermore, it should be mentioned that the proposed $k$-hop GNN models outperform the two standard GNN models on all datasets, demonstrating their superiority.
Overall, the proposed architecture yields good performance, demonstrating that it can learn not only high-quality node representations, but also graph representations.

\noindent\textbf{Runtime Analysis.}
We also compare the running time of the proposed model against that of the standard GNN model on the five real-world datasets.
We report in Table~\ref{tab:runtime_epoch} the average time per epoch, and in Table~\ref{tab:runtime_preprocessing} the preprocessing time (both in seconds).
The hyperparameters of all models were set to the same values.

\begin{table}[t]
    \centering
    \scriptsize
    \def\arraystretch{1.1}
    \begin{sc}
    \begin{tabular}{|l|ccccc|} \hline
    \multirow{2}{*}{\backslashbox{Method}{Dataset}} & \multirow{2}{*}{MUTAG} & \multirow{2}{*}{PROTEINS} & \multirow{2}{*}{NCI$1$} & IMDB & IMDB \\
    & & & & BINARY & MULTI \\ \hline \hline
    2-GNN & 0.01 & 0.08 & 0.29 & 0.07 & 0.10 \\
    3-GNN & 0.02 & 0.12 & 0.38 & 0.09 & 0.13 \\ \hline \hline
    2-hop GNN & 0.03 & 0.18 & 0.59 & 0.18 & 0.22 \\
    3-hop GNN & 0.04 & 0.27 & 0.82 & 0.18 & 0.22 \\ \hline
    \end{tabular}
    \end{sc}
    \caption{Average running time per epoch (in seconds) of the proposed $k$-hop GNN models and the standard GNN models on the $5$ graph classification benchmark datasets.}
    \label{tab:runtime_epoch}
\end{table}

\begin{table}[t]
    \centering
    \scriptsize
    \def\arraystretch{1.1}
    \begin{sc}
    \begin{tabular}{|l|ccccc|} \hline
    \multirow{2}{*}{\backslashbox{Method}{Dataset}} & \multirow{2}{*}{MUTAG} & \multirow{2}{*}{PROTEINS} & \multirow{2}{*}{NCI$1$} & IMDB & IMDB \\
    & & & & BINARY & MULTI \\ \hline \hline
    2-GNN & 2.72 & 3.36 & 4.78 & 3.20 & 3.35 \\
    3-GNN & 2.72 & 3.38 & 4.79 & 3.21 & 3.33 \\ \hline \hline
    2-hop GNN & 3.11 & 13.02 & 19.63 & 31.38 & 28.36 \\
    3-hop GNN & 3.28 & 19.00 & 27.50 & 31.37 & 28.37 \\ \hline
    \end{tabular}
    \end{sc}
    \caption{Preprocessing time (in seconds) of the proposed $k$-hop GNN models and the standard GNN models on the $5$ graph classification benchmark datasets.}
    \label{tab:runtime_preprocessing}
\end{table}

As expected, the standard GNN is faster than the proposed model both in terms of time per epoch and in terms of the preprocessing time.
However, the running time of the proposed model is by no means prohibitive.
In general, the average time per epoch of the proposed models is twice as high as that of the corresponding standard GNN models.
Furthermore, we should note that the preprocessing time of the standard GNN is independent of the value of $k$, while for the proposed model, it increases as $k$ increases (since larger neighborhoods need to be processed).

\section{Conclusion}\label{sec:conclusion}
In this paper, we analyzed the expressive power of GNNs, showing that a wide class of GNN architectures cannot identify fundamental properties of graphs.
We also proposed the $k$-hop GNN model which aggregates information from the nodes' $k$-hop neighborhoods, and is capable of identifying graph properties that are not captured by standard GNNs.
We evaluated the proposed model on node and graph classification datasets, where it achieved results competitive with state-of-the-art algorithms.

\section*{Acknowledgment}
Giannis Nikolentzos was supported by the project ``ESIGMA'' (ANR-17-CE40-0028).
The authors gratefully acknowledge the support of the NVIDIA Corporation, who donated the Titan Xp GPU that was used for this research in the context of the NVIDIA GPU Grant program.

\section*{Appendix}
\subsection*{Proof of Theorem~\ref{thm:expressiveness}}

We assume that the feature vectors of the nodes come from a countable set.
This set may correspond to a subset of an uncountable set such as $\mathbb{R}^d$.
Furthermore, the feature vectors of a set of nodes form a multiset (\ie since some nodes may have identical feature vectors).

We will show that if a graph has one of the three considered properties (\ie triangle-freeness, bipartiteness, and connectivity), some of its nodes can be mapped to different feature vectors compared to the nodes of a graph that does not have the property.
Then, by applying an injective readout function, the two graphs can also be mapped to different feature vectors.

For simplicity of presentation, we will assume that the proposed model consists of a single neighborhood aggregation layer.
The same results also hold for multiple neighborhood aggregation layers.
We first show that the aggregation scheme that our model employs can represent universal functions over the pairs of a node and the multiset
of its neighbors.
The following Lemma generalizes the setting in~\cite{xu2018powerful}.
\begin{lemma}
    Assume $\mathcal{X}$ is countable, and let $r \in \mathbb{N}$.
    There exist functions $f : \mathcal{X} \rightarrow \mathbb{R}^d$ and $f' : \mathcal{X} \rightarrow \mathbb{R}^d$, such that $h_i(c, X) = f(c)+\sum_{x \in X} f'(x)$ is unique for each $i \in \{0,\ldots,r\}$ and each pair $(c, X)$, where $c \in \mathcal{X}$ and $X \subset \mathcal{X}$ is a finite multiset.
    Moreover, any function $g_i$ over such pairs can be decomposed as $g_i(c,X) = \phi\bigl(f(c)+\sum_{x \in X} f'(x)\bigr)$ for some function $\phi$.
    \label{eq:lemma2}
\end{lemma}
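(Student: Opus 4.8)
The plan is to follow the argument of Xu et al.\ \cite{xu2018powerful}, adapting it in two ways: the two summands should come from distinct maps $f$ and $f'$ (rather than $f$ and $(1+\epsilon)f$), and one fixed choice of $(f,f')$ should serve uniformly for every index $i \in \{0,\dots,r\}$. Since $\mathcal{X}$ is countable, fix an injection $Z \colon \mathcal{X} \to \mathbb{N}$. The feature vectors that occur in the network always lie in such a countable set, and within the ambient (finite) graph every multiset $X$ that arises has bounded size and bounded multiplicities, so I may assume a uniform bound $N$ on the multiplicities (this is the role of the parameter $r$, or one may take $N = n$).

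First I would build an injective encoding of a single multiset. Taking $d = 1$ for a moment, set $f'(x) = (N+1)^{-Z(x)}$. Then for a multiset $X$ with multiplicity $m_x \le N$ of each element, $\sum_{x \in X} f'(x) = \sum_x m_x (N+1)^{-Z(x)}$ is a base-$(N+1)$ expansion whose digits are exactly the multiplicities, so distinct multisets give distinct sums. (If one prefers not to bound multiplicities in advance, choose instead values $f'(x)$ that are linearly independent over $\mathbb{Q}$, e.g.\ $f'(x) = \sqrt{p_{Z(x)}}$ with $p_j$ the $j$-th prime; then $\sum_x m_x f'(x) = \sum_x m'_x f'(x)$ forces $m_x = m'_x$ for all $x$.) To also separate the centre $c$ from the multiset it suffices to take $d = 2$ and put $f(c) = \big((N+1)^{-Z(c)},\,0\big)$ and $f'(x) = \big(0,\,(N+1)^{-Z(x)}\big)$; then $h(c,X) = f(c) + \sum_{x\in X} f'(x)$ stores $c$ in the first coordinate and $X$ in the second, hence is injective on pairs $(c,X)$. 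This construction does not depend on $i$, so the uniqueness holds simultaneously for all $i \in \{0,\dots,r\}$.

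For the decomposition claim, fix $i$ and an arbitrary target $g_i$ defined on pairs. Since $h$ is injective it has a left inverse $h^{-1}$ on its image; defining $\phi(z) = g_i\big(h^{-1}(z)\big)$ for $z \in \operatorname{Im}(h)$ and extending $\phi$ arbitrarily (say by $0$) elsewhere yields $g_i(c,X) = \phi\big(f(c) + \sum_{x\in X} f'(x)\big)$, as claimed.

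I expect the only genuine subtlety to be the injectivity of the neighbour sum $\sum_{x\in X} f'(x)$: this is precisely where countability of $\mathcal{X}$ together with either a bound on the multisets or rational independence of the chosen values is used, and where one must keep the centre and neighbour contributions on separate coordinates so that the two sums cannot cancel or collide. The remaining points --- that a single $(f,f')$ works for every index, and that injectivity gives the $\phi$-decomposition --- are purely formal. Finally, since in the $k$-hop GNN the maps $f$, $f'$ and $\phi$ are instantiated by the MLPs inside the $\mathrm{UPDATE}$ modules, I would close by invoking universal approximation so that each of them is representable to arbitrary accuracy, which is what the subsequent use of this lemma in the proof of Theorem~\ref{thm:expressiveness} requires.
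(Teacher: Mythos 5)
Your construction follows essentially the same route as the paper's proof (which itself adapts Xu et al.~\cite{xu2018powerful}): fix an injection $Z:\mathcal{X}\to\mathbb{N}$, encode a multiset injectively by a geometric/base-$(N{+}1)$ sum $\sum_{x\in X}(N{+}1)^{-Z(x)}$ using a finite bound on the multisets, and obtain $\phi$ as $g_i\circ h^{-1}$ on the image of the injective map $h$. Your variant of separating the centre from the neighbours by putting them in two different coordinates ($d=2$) is a perfectly valid and arguably cleaner alternative to the paper's choice, which keeps $d=1$ and separates them as integer part ($f(c)=Z(c)+(r-i)|\mathcal{X}|$) versus fractional part ($\sum_{x\in X}N^{-Z(x)}<1$); your version also sidesteps the paper's implicit finiteness assumption hidden in the term $|\mathcal{X}|$. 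The $\sqrt{p_{Z(x)}}$ aside is a correct alternative when one does not want to bound multiplicities, and the closing appeal to universal approximation matches the remark the paper makes right after the lemma.

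The one genuine divergence is your treatment of the index $i$. You read ``unique for each $i\in\{0,\ldots,r\}$ and each pair $(c,X)$'' as ``for every $i$, the map $(c,X)\mapsto h_i(c,X)$ is injective,'' which your $i$-independent construction satisfies vacuously since $h_0=h_1=\cdots=h_r$. The paper's proof, however, deliberately builds the shift $(r-i)|\mathcal{X}|$ into $f$ so that the values of $h_i$ for different $i$ are pairwise distinct, i.e.\ the map $(i,c,X)\mapsto h_i(c,X)$ is injective. This stronger, cross-index uniqueness is what the subsequent proof of Theorem~\ref{thm:expressiveness} leans on when it argues that a representation produced by one $\mathrm{UPDATE}$ module (e.g.\ $\mathrm{UPDATE}_{1,within}$ applied with $\mathcal{D}\neq\varnothing$) is \emph{never} obtained via a different module or left un-updated in the other graph: outputs of different levels/types of update must not collide. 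With your construction as stated, that guarantee does not follow. The fix is trivial --- add an $i$-dependent offset, e.g.\ a third coordinate equal to $i$ (or shift $f$ by $i\cdot M$ for $M$ exceeding the range used by the encoding) --- but as written this part of the claim is not established, so you should either prove the stronger statement or note explicitly that per-module functions $\phi_i$ with disjoint ranges are being used downstream.
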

\begin{proof}
    We first show that there exists a mapping $f'$ such that $\sum_{x \in X} f'(x)$ is unique for each finite multiset $X$.
    Because $\mathcal{X}$ is countable, there exists a mapping $Z : \mathcal{X} \rightarrow \mathbb{N}$ from $x \in \mathcal{X}$ to natural numbers.
    Because the multisets $X$ are finite, there exists a number $N \in \mathbb{N}$ such that $|X| < N$ for all $X$.
    Then, an example of such $f'$ is $f'(x) = N^{-Z(x)}$.
    The sum of the above function for all finite multisets $X$ takes values less than $1$, \ie $\sum_{x \in X} f'(x) < 1$.
    Hence, if we also set $f(x) = Z(x)+(r-i) |\mathcal{X}|$, then it holds that $h_i(c, X) = f(c)+\sum_{x \in X} f'(x)$ is an injective function over pairs of elements and multisets, and is also unique for each $i \in \{ 0,\ldots,r \}$.

    For any function $g_i$ over the pairs $(c,X)$, we can construct such $\phi$ for the desired decomposition by letting $g_i(c, X) = \phi\bigl(f(c)+\sum_{x \in X} f'(x)\bigr)$.
    Note that such $\phi$ is well-defined because $h_i(c, X) = f(c)+\sum_{x \in X} f'(x)$ is injective.
\end{proof}
In our setting, the $\text{UPDATE}$ modules correspond to $g_i(c, X)$ functions.
These modules use multi-layer perceptrons (MLPs) to model and learn $f,f'$ and $\phi$ in the above Lemma, thanks to the universal approximation theorem~\cite{hornik1989multilayer}.
Note that given two nodes $v,v'$, if a node $u \in \mathcal{N}_k(v)$ obtains a representation that is never obtained by any node $u' \in \mathcal{N}_k(v')$, then based on the above Lemma, there exist $\text{UPDATE}$ modules such that the root nodes $v,v'$ are assigned different representations.
Hence, for all three properties, it is sufficient to show that at some point of the algorithm, a node of the graph that satisfies the property can obtain a representation that is never obtained by any node of a graph that does not satisfy the property.
\paragraph*{Triangle-freeness}

If a graph is not triangle-free, then there exist at least three nodes whose $1$-hop neighborhoods contain a triangle.
Let $v$ be such a node.
Then, clearly there are at least two nodes $u \in R_1(v)$ which are connected to each other by an edge.
The representations of these nodes are updated as follows: $x_u = \text{UPDATE}_{1, within}^{(0)} \bigl( u, \mathcal{D} \bigr)$.
On the other hand, no such update takes place in the case of triangle-free graphs since $\mathcal{D} = \varnothing$.
Hence, based on Lemma~\ref{eq:lemma2}, the above $\text{UPDATE}_{1, within}^{(0)}$ module can generate different representations for the nodes that participate in a triangle from the representations of the nodes of the neighborhood subgraph of each node of a triangle-free graph.
\paragraph*{Connectivity}
    
Let $G$ be a disconnected graph and $C$ its component which has the minimum diameter $\delta_{min}$.
Then, for an arbitrary node $v$ of component $C$, it holds that $R_i(v) = \varnothing$ for all $i > \delta_{min}$.
On the other hand, if the graph is connected, for some node $v$, it holds that $|R_i(v)|  > 0$ for all $i \leq \delta$ where $\delta > \delta_{min}$ is the diameter of the connected graph.
Hence, the representations of the nodes $u \in R_i(v)$ and $u' \in R_{i-1}(v)$ are updated as $x_u = \text{UPDATE}_{i, within}^{(0)} \bigl( u, \mathcal{D} \bigr)$ and $x_{u'} = \text{UPDATE}_{i-1, across}^{(0)} \bigl( u', \mathcal{D}' \bigr)$, respectively.
Based on Lemma~\ref{eq:lemma2}, the above two $\text{UPDATE}$ modules can generate different representations for the nodes of a neighborhood subgraph of a disconnected graph compared to those of the nodes of the neighborhood subgraph of a connected graph.
\paragraph*{Bipartiteness}

It is well-known that a graph is bipartite if and only if it does not contain an odd cycle.
If $G$ is bipartite and $l$ is the length of the smallest odd cycle in $G$, then the $k$-hop neighborhood subgraphs ($k \geq \frac{l-1}{2}$) of more than one nodes contain a cycle of odd length.
According to Lemma~\ref{lemma:bipartite}, the $k$-hop neighborhood subgraph of a node $v$ contains a cycle of odd length if and only if the shortest path lengths from two adjacent nodes $u,w \in \mathcal{N}_k(v)$ to $v$ are identical.
In other words, there exist two nodes both at the same level $i$ of the $k$-hop neighborhood subgraph of node $v$ that are connected to each other with an edge.
During the process of updating the representation of the root $v$, the feature vectors of these nodes are also updated as follows: $x_u = \text{UPDATE}_{i, within}^{(0)} \bigl( u, \mathcal{D} \bigr)$.
This update does not take place in the case of a non-bipartite graph since $\mathcal{D} = \varnothing$ for all nodes of all neighborhood subgraphs.
Based on Lemma~\ref{eq:lemma2}, these nodes can obtain different representations from all the representations of the nodes of a neighborhood subgraph extracted from a bipartite graph.

\begin{lemma}
    Let $G_v^k$ be the $k$-hop neighborhood subgraph of a node $v$.
    Then, $G_v^k$ contains a cycle of odd length if and only if the shortest path lengths from two adjacent nodes $u,w \in \mathcal{N}_k(v)$ to $v$ are identical.
    \label{lemma:bipartite}
\end{lemma}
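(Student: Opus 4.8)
The plan is to prove the two directions of the equivalence separately, using only elementary facts about shortest paths and the parity of cycles.

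For the ``if'' direction, suppose $u, w \in \mathcal{N}_k(v)$ are adjacent and have the same shortest path distance $d$ to $v$ (so $1 \le d \le k$). First I would fix a shortest $v$--$u$ path $P_u$ and a shortest $v$--$w$ path $P_w$; each has length $d$, and every vertex appearing on them is within distance $d \le k$ of $v$, hence belongs to $\mathcal{N}_k(v)$ and is therefore a vertex of the induced subgraph $G_v^k$. Concatenating $P_u$, the edge $uw$, and $P_w$ traversed in reverse yields a closed walk inside $G_v^k$ of length $2d+1$, which is odd. I would then invoke the standard fact that a graph containing a closed walk of odd length contains a cycle of odd length (equivalently, such a graph is not bipartite), which produces an odd cycle lying entirely in $G_v^k$.

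For the ``only if'' direction, suppose $G_v^k$ contains an odd cycle $C = z_0 z_1 \cdots z_{\ell-1} z_0$ with $\ell$ odd. For each $z_i$ let $\mathrm{dist}(z_i)$ be its shortest path distance to $v$; note this value is the same whether measured in $G$ or in $G_v^k$, since for any vertex at distance at most $k$ from $v$ a shortest path from $v$ stays inside $\mathcal{N}_k(v)$. Since consecutive vertices of $C$ are adjacent, $|\mathrm{dist}(z_{i+1}) - \mathrm{dist}(z_i)| \le 1$ for all $i$ (indices mod $\ell$). I would then argue by contradiction: if no two adjacent vertices of $C$ had equal distance to $v$, each step $\mathrm{dist}(z_{i+1}) - \mathrm{dist}(z_i)$ would be exactly $+1$ or $-1$, and summing these $\ell$ steps around the cycle gives $0$, which forces $\ell$ to be even, contradicting the oddness of $C$. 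Hence some consecutive pair $z_i, z_{i+1}$ of $C$ are adjacent, lie in $\mathcal{N}_k(v)$, and satisfy $\mathrm{dist}(z_i) = \mathrm{dist}(z_{i+1})$ (and neither equals $v$, whose only possible distance $0$ cannot be matched by an adjacent vertex), as required.

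There is no serious obstacle here; the only points needing care are (i) checking that all vertices used in the ``if'' direction genuinely belong to $\mathcal{N}_k(v)$, so that the constructed odd closed walk really lives in $G_v^k$, and (ii) the observation that shortest path distances to $v$ coincide in $G$ and in $G_v^k$, which is precisely what lets the level structure $R_d(v)$ used elsewhere in the algorithm be read off from $G_v^k$ itself.
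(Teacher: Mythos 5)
Your proof is correct, and on the converse direction it takes a genuinely different (and tighter) route than the paper. For the ``if'' direction you use the same construction as the paper---concatenate shortest $v$--$u$ and $v$--$w$ paths with the edge $uw$---but where the paper simply asserts that this yields a cycle of length $2d+1$, you correctly observe that it only yields a closed walk of odd length (the two shortest paths may intersect) and then invoke the standard fact that an odd closed walk contains an odd cycle; you also check explicitly that all vertices involved lie in $\mathcal{N}_k(v)$, so the walk, and hence the odd cycle, lives in the induced subgraph $G_v^k$, a point the paper leaves implicit. For the ``only if'' direction the paper argues by taking ``the single node $u$ of the cycle at maximum distance from $v$'' and claiming the cycle splits into two $v$--$u$ paths of length $d$, hence has even length $2d$; as written this is loose, since the maximum distance need not be attained at a unique cycle vertex and the cycle need not pass through $v$ or be composed of shortest paths. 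Your telescoping parity argument---if no two consecutive cycle vertices are at the same level, every edge changes the distance to $v$ by exactly $\pm 1$, and these increments sum to zero around the cycle, forcing even length---sidesteps these issues and is the cleaner way to make the level-parity intuition rigorous. Your remark that distances to $v$ computed in $G$ and in $G_v^k$ coincide for vertices of $\mathcal{N}_k(v)$ is also a useful observation that the paper takes for granted.
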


\begin{proof}
    Let $u,w$ be two vertices such that $u,w \in \mathcal{N}_k(v)$.
    Assume that the shortest path lengths between each of these two vertices and the root $v$ are identical and equal to $d \in \mathbb{N}$ such that $d\leq k$.
    If $u$ and $w$ are connected by an edge, then $G_v^k$ contains a cycle of length $2d + 1$ which is clearly an odd number.
    This proves the first statement.
    For the second statement, assume that $G_v^k$ contains a cycle of odd length and there is no edge between two vertices whose shortest path lengths from the root $v$ are identical.
    Then, from all the nodes of the cycle, there is a single node $u$ such that the shortest path distance from the root $v$ to $u$ is maximum, and another node $w$ such that the shortest path distance from the root $v$ to $w$ is minimum ($w$ could correspond to the root itself).
    Since this is a cycle, there are two paths from $w$ to $u$ of length $d$.
    Hence, the length of the cycle is $2d$ which is an even number, contradicting the assumption.
\end{proof}

\bibliographystyle{plain}
\bibliography{biblio}

\begin{thebibliography}{10}

\bibitem{abu2019mixhop}
Sami Abu-El-Haija, Bryan Perozzi, Amol Kapoor, Nazanin Alipourfard, Kristina
  Lerman, Hrayr Harutyunyan, Greg Ver~Steeg, and Aram Galstyan.
\newblock {MixHop: Higher-Order Graph Convolutional Architectures via
  Sparsified Neighborhood Mixing}.
\newblock In {\em Proceedings of 36th International Conference on Machine
  Learning}, pages 21--29, 2019.

\bibitem{babai1979canonical}
L{\'a}szl{\'o} Babai and Ludik Kucera.
\newblock Canonical labelling of graphs in linear average time.
\newblock In {\em Foundations of Computer Science, 1979., 20th Annual Symposium
  on}, pages 39--46. IEEE, 1979.

\bibitem{bianchi2019graph}
Filippo~Maria Bianchi, Daniele Grattarola, Cesare Alippi, and Lorenzo Livi.
\newblock {Graph Neural Networks with Convolutional ARMA Filters}.
\newblock {\em arXiv preprint arXiv:1901.01343}, 2019.

\bibitem{borgwardt2005shortest}
Karsten~M Borgwardt and Hans-Peter Kriegel.
\newblock Shortest-path kernels on graphs.
\newblock In {\em Proceedings of the 5th International Conference on Data
  Mining}, pages 74--81, 2005.

\bibitem{borgwardt2005protein}
Karsten~M Borgwardt, Cheng~Soon Ong, Stefan Sch{\"o}nauer, SVN Vishwanathan,
  Alex~J Smola, and Hans-Peter Kriegel.
\newblock Protein function prediction via graph kernels.
\newblock {\em Bioinformatics}, 21(suppl\_1):i47--i56, 2005.

\bibitem{bronstein2017geometric}
Michael~M Bronstein, Joan Bruna, Yann LeCun, Arthur Szlam, and Pierre
  Vandergheynst.
\newblock {Geometric Deep Learning: Going beyond Euclidean data}.
\newblock {\em IEEE Signal Processing Magazine}, 34(4):18--42, 2017.

\bibitem{carneiro2017organizational}
Murillo~Guimar{\~a}es Carneiro and Liang Zhao.
\newblock {Organizational Data Classification Based on the Importance Concept
  of Complex Networks}.
\newblock {\em IEEE Transactions on Neural Networks and Learning Systems},
  29(8):3361--3373, 2017.

\bibitem{cupertino2018scheme}
Thiago~Henrique Cupertino, Murillo~Guimar{\~a}es Carneiro, Qiusheng Zheng,
  Junbao Zhang, and Liang Zhao.
\newblock A scheme for high level data classification using random walk and
  network measures.
\newblock {\em Expert Systems with Applications}, 92:289--303, 2018.

\bibitem{defferrard2016convolutional}
Micha{\"e}l Defferrard, Xavier Bresson, and Pierre Vandergheynst.
\newblock {Convolutional Neural Networks on Graphs with Fast Localized Spectral
  Filtering}.
\newblock In {\em Advances in Neural Information Processing Systems}, pages
  3844--3852, 2016.

\bibitem{donnat2018learning}
Claire Donnat, Marinka Zitnik, David Hallac, and Jure Leskovec.
\newblock {Learning Structural Node Embeddings via Diffusion Wavelets}.
\newblock In {\em Proceedings of the 24rd International Conference on Knowledge
  Discovery and Data Mining}, pages 1320--1329, 2018.

\bibitem{fey2019fast}
Matthias Fey and Jan~Eric Lenssen.
\newblock Fast graph representation learning with pytorch geometric.
\newblock {\em arXiv preprint arXiv:1903.02428}, 2019.

\bibitem{gao2019geometric}
Feng Gao, Guy Wolf, and Matthew Hirn.
\newblock {Geometric Scattering for Graph Data Analysis}.
\newblock In {\em Proceedings of the 36th International Conference on Machine
  Learning}, pages 2122--2131, 2019.

\bibitem{gilmer2017neural}
Justin Gilmer, Samuel~S Schoenholz, Patrick~F Riley, Oriol Vinyals, and
  George~E Dahl.
\newblock {Neural Message Passing for Quantum Chemistry}.
\newblock In {\em Proceedings of the 34th International Conference on Machine
  Learning}, pages 1263--1272, 2017.

\bibitem{goldreich2017introduction}
Oded Goldreich.
\newblock {\em Introduction to property testing}.
\newblock Cambridge University Press, 2017.

\bibitem{gori2005new}
Marco Gori, Gabriele Monfardini, and Franco Scarselli.
\newblock {A New Model for Learning in Graph Domains}.
\newblock In {\em Proceedings of the 2005 IEEE International Joint Conference
  on Neural Networks}, volume~2, pages 729--734, 2005.

\bibitem{hamilton2017inductive}
Will Hamilton, Zhitao Ying, and Jure Leskovec.
\newblock {Inductive Representation Learning on Large Graphs}.
\newblock In {\em Advances in Neural Information Processing Systems}, pages
  1024--1034, 2017.

\bibitem{henderson2012rolx}
Keith Henderson, Brian Gallagher, Tina Eliassi-Rad, Hanghang Tong, Sugato Basu,
  Leman Akoglu, Danai Koutra, Christos Faloutsos, and Lei Li.
\newblock {RolX: Structural Role Extraction \& Mining in Large Graphs}.
\newblock In {\em Proceedings of the 18th International Conference on Knowledge
  Discovery and Data Mining}, pages 1231--1239, 2012.

\bibitem{hornik1989multilayer}
Kurt Hornik, Maxwell Stinchcombe, and Halbert White.
\newblock {Multilayer Feedforward Networks are Universal Approximators}.
\newblock {\em Neural networks}, 2(5):359--366, 1989.

\bibitem{kearnes2016molecular}
Steven Kearnes, Kevin McCloskey, Marc Berndl, Vijay Pande, and Patrick Riley.
\newblock Molecular graph convolutions: moving beyond fingerprints.
\newblock {\em Journal of computer-aided molecular design}, 30(8):595--608,
  2016.

\bibitem{kipf2017semi}
Thomas~N Kipf and Max Welling.
\newblock {Semi-Supervised Classification with Graph Convolutional Networks}.
\newblock In {\em Proceedings of the 5th International Conference on Learning
  Representations}, 2017.

\bibitem{kriege2016valid}
Nils~M Kriege, Pierre-Louis Giscard, and Richard Wilson.
\newblock On valid optimal assignment kernels and applications to graph
  classification.
\newblock In {\em Advances in Neural Information Processing Systems}, pages
  1623--1631, 2016.

\bibitem{kriege2018property}
Nils~M Kriege, Christopher Morris, Anja Rey, and Christian Sohler.
\newblock {A Property Testing Framework for the Theoretical Expressivity of
  Graph Kernels}.
\newblock In {\em Proceedings of the 27th International Joint Conference on
  Artificial Intelligence}, pages 2348--2354, 2018.

\bibitem{lei2017deriving}
Tao Lei, Wengong Jin, Regina Barzilay, and Tommi Jaakkola.
\newblock {Deriving Neural Architectures from Sequence and Graph Kernels}.
\newblock In {\em Proceedings of the 34th International Conference on Machine
  Learning}, pages 2024--2033, 2017.

\bibitem{li2015gated}
Yujia Li, Daniel Tarlow, Marc Brockschmidt, and Richard Zemel.
\newblock {Gated Graph Sequence Neural Networks}.
\newblock {\em arXiv preprint arXiv:1511.05493}, 2015.

\bibitem{morris2018weisfeiler}
Christopher Morris, Martin Ritzert, Matthias Fey, William~L Hamilton, Jan~Eric
  Lenssen, Gaurav Rattan, and Martin Grohe.
\newblock {Weisfeiler and Leman Go Neural: Higher-order Graph Neural Networks}.
\newblock In {\em Proceedings of the 33rd AAAI Conference on Artificial
  Intelligence}, 2019.

\bibitem{niepert2016learning}
Mathias Niepert, Mohamed Ahmed, and Konstantin Kutzkov.
\newblock {Learning Convolutional Neural Networks for Graphs}.
\newblock In {\em Proceedings of the 33rd International Conference on Machine
  Learning}, 2016.

\bibitem{nikolentzos2019graph}
Giannis Nikolentzos, Giannis Siglidis, and Michalis Vazirgiannis.
\newblock {Graph Kernels: A Survey}.
\newblock {\em arXiv preprint arXiv:1904.12218}, 2019.

\bibitem{ribeiro2017struc2vec}
Leonardo~FR Ribeiro, Pedro~HP Saverese, and Daniel~R Figueiredo.
\newblock {struc2vec: Learning Node Representations from Structural Identity}.
\newblock In {\em Proceedings of the 23rd International Conference on Knowledge
  Discovery and Data Mining}, pages 385--394, 2017.

\bibitem{scarselli2008computational}
Franco Scarselli, Marco Gori, Ah~Chung Tsoi, Markus Hagenbuchner, and Gabriele
  Monfardini.
\newblock {Computational Capabilities of Graph Neural Networks}.
\newblock {\em IEEE Transactions on Neural Networks}, 20(1):81--102, 2008.

\bibitem{scarselli2009graph}
Franco Scarselli, Marco Gori, Ah~Chung Tsoi, Markus Hagenbuchner, and Gabriele
  Monfardini.
\newblock {The Graph Neural Network Model}.
\newblock {\em IEEE Transactions on Neural Networks}, 20(1):61--80, 2009.

\bibitem{shervashidze2011weisfeiler}
Nino Shervashidze, Pascal Schweitzer, Erik Jan~van Leeuwen, Kurt Mehlhorn, and
  Karsten~M Borgwardt.
\newblock {Weisfeiler-Lehman Graph Kernels}.
\newblock {\em The Journal of Machine Learning Research}, 12:2539--2561, 2011.

\bibitem{shervashidze2009efficient}
Nino Shervashidze, SVN Vishwanathan, Tobias Petri, Kurt Mehlhorn, and Karsten
  Borgwardt.
\newblock {Efficient Graphlet Kernels for Large Graph Comparison}.
\newblock In {\em Proceedings of the International Conference on Artificial
  Intelligence and Statistics}, pages 488--495, 2009.

\bibitem{siglidis2020grakel}
Giannis Siglidis, Giannis Nikolentzos, Stratis Limnios, Christos Giatsidis,
  Konstantinos Skianis, and Michalis Vazirgiannis.
\newblock {GraKeL: A Graph Kernel Library in Python}.
\newblock {\em Journal of Machine Learning Research}, 21(54):1--5, 2020.

\bibitem{silva2012network}
Thiago~Christiano Silva and Liang Zhao.
\newblock {Network-Based High Level Data Classification}.
\newblock {\em IEEE Transactions on Neural Networks and Learning Systems},
  23(6):954--970, 2012.

\bibitem{such2017robust}
Felipe~Petroski Such, Shagan Sah, Miguel~Alexander Dominguez, Suhas Pillai,
  Chao Zhang, Andrew Michael, Nathan~D Cahill, and Raymond Ptucha.
\newblock {Robust Spatial Filtering With Graph Convolutional Neural Networks}.
\newblock {\em IEEE Journal of Selected Topics in Signal Processing},
  11(6):884--896, 2017.

\bibitem{velickovic2018graph}
Petar Velickovic, Guillem Cucurull, Arantxa Casanova, Adriana Romero, Pietro
  Lio, and Yoshua Bengio.
\newblock {Graph Attention Networks}.
\newblock In {\em Proceedings of the 6th International Conference on Learning
  Representations}, 2018.

\bibitem{weisfeiler1968reduction}
Boris Weisfeiler and AA~Lehman.
\newblock A reduction of a graph to a canonical form and an algebra arising
  during this reduction.
\newblock {\em Nauchno-Technicheskaya Informatsia}, 2(9):12--16, 1968.

\bibitem{xinyi2018capsule}
Zhang Xinyi and Lihui Chen.
\newblock {Capsule Graph Neural Network}.
\newblock In {\em Proceedings of the 7th International Conference on Learning
  Representations}, 2019.

\bibitem{xu2018powerful}
Keyulu Xu, Weihua Hu, Jure Leskovec, and Stefanie Jegelka.
\newblock {How Powerful are Graph Neural Networks?}
\newblock {\em arXiv preprint arXiv:1810.00826}, 2018.

\bibitem{yang2011like}
Shuang-Hong Yang, Bo~Long, Alex Smola, Narayanan Sadagopan, Zhaohui Zheng, and
  Hongyuan Zha.
\newblock {Like like alike - Joint Friendship and Interest Propagation in
  Social Networks}.
\newblock In {\em Proceedings of the 20th International Conference on World
  Wide Web}, pages 537--546. ACM, 2011.

\bibitem{ying2018hierarchical}
Zhitao Ying, Jiaxuan You, Christopher Morris, Xiang Ren, Will Hamilton, and
  Jure Leskovec.
\newblock {Hierarchical Graph Representation Learning with Differentiable
  Pooling}.
\newblock In {\em Advances in Neural Information Processing Systems}, pages
  4801--4811, 2018.

\bibitem{zhang2018end}
Muhan Zhang, Zhicheng Cui, Marion Neumann, and Yixin Chen.
\newblock {An End-to-End Deep Learning Architecture for Graph Classification}.
\newblock In {\em Proceedings of the 32nd AAAI Conference on Artificial
  Intelligence}, pages 4438--4445, 2018.

\end{thebibliography}

\end{document}